\newtheorem{theorem}{Theorem}
\newtheorem{lemma}{Lemma}
\newtheorem{corollary}{Corollary}
\newcommand{\Mc}{\mathcal{M}}
\newcommand{\ml}[2]{\Mc_\ell(#1;#2)}
\newcommand{\mlda}[2]{\Mc_{\ell,1}^\prime(#1;#2)}
\newcommand{\mldaa}[2]{\Mc_{\ell,1}^{\prime\prime}(#1;#2)}
\newcommand{\mldb}[2]{\Mc_{\ell,2}^\prime(#1;#2)}
\newcommand{\la}{\lambda}
\newcommand{\prox}[2]{\operatorname{prox}_{\ell}(#1;#2)}
\newcommand{\R}{\mathbb{R}}
\newcommand{\E}{\mathbb{E}}
\newcommand{\vectr}[1]{\underline{#1}}
\newcommand{\matr}[1]{\pmb{\MakeUppercase{#1}}}
\crefname{figure}{Figure}{Figures}
\crefname{corollary}{Corollary}{Corollaries}
\crefname{theorem}{Theorem}{Theorems}
\crefname{table}{Table}{Tables}
\crefname{appendix}{Appendix}{Appendices}
\crefname{equation}{}{}
\title{Thumb on the Scale: Optimal Loss Weighting \\ in Last Layer Retraining}
\author{Nathan Stromberg \\ Arizona State University \\   \texttt{nstrombe@asu.edu} \\
 \And Christos Thrampoulidis \\ University of British Columbia \\ \texttt{cthrampo@ece.ubc.ca} \And Lalitha Sankar \\ Arizona State University \\ \texttt{lsankar@asu.edu}}
\begin{document}

\maketitle
\begin{abstract}
    While machine learning models become more capable in discriminative tasks at scale, their ability to overcome biases introduced by training data has come under increasing scrutiny. Previous results suggest that there are two extremes of parameterization with very different behaviors: the population (underparameterized) setting where loss weighting is optimal and the separable overparameterized setting where loss weighting is ineffective at ensuring equal performance across classes. This work explores the regime of last layer retraining (LLR) in which the unseen limited (retraining) data is frequently inseparable and the model proportionately sized, falling between the two aforementioned extremes. We show, in theory and practice, that loss weighting is still effective in this regime, but that these weights \emph{must} take into account the relative overparameterization of the model. 
\end{abstract}

\section{Introduction}
While discriminative machine learning has produced exceptional predictive power in many settings, even the most expressive models have struggled to balance the accuracy of common (majority) classes and rare (minority) classes. This comes as the most critical tasks, such as disease prediction, fraud detection, and rare event monitoring are fundamentally extremely imbalanced. It has, therefore, become critical to correct the existing majority bias even in the age of highly expressive, overparameterized models. 

Substantial research has gone into studying methods for correcting models for their majority bias, and the most successful lines of research have been driven by the idea of last layer retraining (LLR)~\citep{kirichenko_last_2023,izmailov_feature_2022} By updating only the final linear layer of a large model, the number of parameters to train is greatly reduced while still providing the flexibility to balance the accuracy for minority classes. In this setting, a variety of cost-sensitive methods have been investigated, including weighted empirical risk minimization (wERM) sometimes called importance weighting~\citep{welfert2024theoretical}, downsampling~\citep{chaudhuri2023ds,welfert2024theoretical}, and other corrected losses~\citep{menon_long-tail_2021,cao_learning_2019,ye_towards_2021,lyu_statistical_2025,kini2021label,lai2024ood}. Each of these methods has shown compelling empirical success in the LLR setting, improving the accuracy of minority classes or groups to match that of majority groups. Despite this, theoretical and empirical evidence has shown that, for overparameterized models, wERM has no effect on the learned model whatsoever \citep{byrd2019iw, xu2021understanding}. How do we reconcile this seeming contradiction when we are correcting large models?

Last layer retraining is often characterized by two quantities, the latent data dimension $d$ (input to the last layer) and the number of retraining samples $n$. The overparameterization ratio $\delta \triangleq d/n \in (0,\infty)$ 
has been used to breakdown the performance of the above-mentioned methods into two critical regimes: the populations setting ($\delta\rightarrow0$) and the overparameterized setting ($\delta >1$). 
For example, in the population setting, the methods described above for LLR have demonstrated both empirical and theoretical success in the population regime~\citep{welfert2024theoretical}.  On the other hand, only downsampling and CS-SVM have been successful in improving minority class accuracy when the model is highly overparameterized, i.e., $\delta > 1$, and can separate the data~\citep{chaudhuri2023ds,kini2021label,lai2024ood}. 
In this work, we consider the ratio $\delta$ in the understudied but highly relevant regime of $\delta \in (0,1)$. This underparameterized regime is critical because, in practice, the number of samples is often on the same order as the number of parameters in the last layer retraining scenario. 

In this underparameterized regime, we study the problem of obtaining loss weights which minimize worst-class error (WCE) for binary classification. Our key contributions in this setting are as follows:
\begin{itemize}[leftmargin=*]
    \item We simplify the wERM optimization with any loss on a sample from a class-conditional Gaussian distribution
    to a general system of four scalar equations for the setting where both $d,n\rightarrow\infty$ 
    \item For the setting above and the choice of square loss, we derive optimal asymptotic weighting as a function of the overparameterization ratio $\delta$.
    \item We compare this optimal weighting scheme to downsampling and show that optimal wERM outperforms downsampling, especially when data is limited. 
    \item Finally, we show that the trends described for simple Gaussian data in theory appear in real-world image classification problems and that our optimal weighting scheme can outperform the classical ratio of priors by leveraging the notion of an effective (latent) dimension.
\end{itemize}
\subsection{Related Work}
\paragraph{Importance Weighting in Separable Settings} While wERM has been successful in practice, its applicability in high-dimensional problems has been called into question. 
\citet{soudry_implicit_2018,ji_implicit_2019, gunasekar_characterizing_2018} explore implicit bias of gradient descent-type methods on linear models and characterize the limiting model for a variety of loss families, including square and logistic losses. 
\citet{xu_understanding_2021} explicitly consider loss weighting in the implicit bias  and find that importance weighting on common losses does not alter the learned model, but can improve the convergence rate in imbalanced settings. 
\citet{byrd2019iw,sagawa_investigation_2020} explore the problem of importance weighting with a focus on deep models and find that larger models are less affected by loss weighting than smaller models.
\paragraph{Overparameterized Learning}
\citet{kini2021label,behnia_how_2022} study CS-SVM  in the overparameterized regime and show that in this setting the ineffectiveness of loss weighting can be overcome by considering an altered loss, namely the vector scaling loss. Their theoretical results help to explain important behaviors for $\delta>1$.
\citet{lyu_statistical_2025} study a similar loss for $\delta>0$ and find that even when $\delta<1$, the margin-scaled loss can be helpful for mitigating model biases. \citet{lai2024ood} study margin-adjusted minimum norm interpolation in the overparameterized regime and show that this can help to mitigate poor out-of-distribution generalization. Our work instead considers loss weighting (as opposed to margin weighting) in the underparameterized regime.
\paragraph{Downsampling}
\citet{chaudhuri2023ds} study downsampling in the context of separable SVM and work with fixed $d$ while scaling up the class means as $n\rightarrow\infty$. Their work utilizes extreme value theory and finds that downsampling can improve WCE for several data distributions. Several works empirically utilize downsampling in the LLR setting to improve subgroup fairness including \citet{kirichenko_last_2023,labonte2023towards,stromberg2024for}. Our work focuses weighted learning on inseparable dala and the proportional regime of $\delta\in(0,1)$.
\paragraph{Population Risk Minimization}
In the population setting, \citet{welfert2024theoretical} study wERM, downsampling, and MixUp. They show that the first two are equivalent problems, and that all three result in the same solution given $n\rightarrow\infty$. We instead focus on both $d,n\rightarrow\infty$ at a fixed ratio which can better explain practical settings like LLR where the number of data points and the number of trainable parameters are on the same order. 

\section{Problem Setup}
We denote deterministic vectors as $\vectr{x}$ and their random counterparts as $\vectr{X}$. Matrices are denoted $\matr{X}$ and their randomness is clear by context. We begin by defining the generative model and key functions of interest.
\subsection{Data and Metrics}
\label{sec:setup}
We assume the following generative model for the latent data  $\vectr{X}$ and label $Y$:
\begin{align*}
    Y &\sim \begin{cases}
        -1 & \textrm{w.p.} \; \pi_-\\
        1 & \textrm{w.p.} \; \pi_+
    \end{cases} \\ 
    \vectr{X}|(Y=y) &\sim \mathcal{N}(y\vectr{\mu}, \mathbf{I})
\end{align*}
with $\vectr{\mu}\in\mathbb{R}^d$ and $\mathbf{I}\in\mathbb{R}^{d\times d}$ the identity matrix.
Without loss of generality, we assume that $\pi_+<\pi_-$ so $+1$ is the minority class. 
Assume we get $n$ samples as $(\matr{X},\vectr{y})\in\mathbb{R}^{n\times d} \times \mathbb{R}^n$ following the above distribution.

The key metric studied in this work is worst-class error (WCE) which provides a notion of fairness for a given classifier. We can calculate the expected risk (error) on each class for a given linear model $(\vectr{\theta},b)$ as follows:
\begin{equation}
    \mathcal{R}_+\triangleq Q\left(\frac{\gamma s}{\alpha}+ \frac{b}{\alpha}\right), \qquad 
    \mathcal{R}_-\triangleq Q\left( \frac{\gamma s}{\alpha}- \frac{b}{\alpha}\right),
    \label{eq:risk}
\end{equation}
where $s\triangleq\|\vectr{\mu}\|\in\mathbb{R}$ is the signal strength, $\gamma\triangleq \frac{\vectr{\mu}^\top\vectr{\theta}}{\|\vectr{\mu}\|}\in\mathbb{R}$ is the energy of $\vectr{\theta}$ along $\mu$,  $\alpha\triangleq\|\vectr{\theta}\|\in\mathbb{R}$ is the total energy of $\vectr{\theta}$, and $Q(\cdot)$ is the standard $Q$-function. More details can be found in \cref{app:risk_proof}. These three scalar quantities and the bias will be the focus of our analysis.

We can then calculate WCE as follows:
\begin{equation}
    \text{WCE}(\vectr{\theta}, b) \triangleq \max\{\mathcal{R}_+,\mathcal{R}_-\}
    \label{eq:wce}
\end{equation}

In the sequel, we will make use the following function class: the Moreau envelope of a function $f$ is parameterized by $\lambda$ and defined as \begin{equation}
    \mathcal{M}_f(x;\lambda) \triangleq \inf_v f(v) + \frac{1}{2\lambda}\Vert v-x \Vert_2^2
\end{equation} 
and can be thought of as a locally smoothed version of $f$. It is always continuously differentiable with respect to $x$ and $\lambda$. The $v$ that achieves the infimum above is  given by the $\operatorname{prox}$ operator.
We will also make use of the partial derivatives of $\mathcal{M}$ which we denote as $\mlda{x}{\la}$ and $\mldb{x}{\la}$ for the partials with respect to the first and second arguments respectively. The second derivatives are denoted similarly. See \cref{app:moreau_partial} for a full definition.

\section{Asymptotic Analysis of Cost-Sensitive Last Layer Methods}
We will now discuss our key theoretical results for both weighted and downsampled ERM in the asymptotic setting where both $d,n\rightarrow \infty$ such that $\delta=d/n\in (0,1)$. Furthermore, we specialize in the analysis for square loss for which we can derive closed form solutions.
\subsection{Weighted ERM}
We consider the weighted ERM problem
\begin{equation}
    \min_{\vectr{\theta},b} \frac{1}{n}\sum_{i=1}^{n} \omega_i\ell\left( y_i(\vectr{x}_i^\top \vectr{\theta} + b)\right)
    \label{eq:werm_basic}
\end{equation}
where $\ell$ is the margin-based loss function of interest and is only required to be convex in its argument.
We consider the special case of \emph{class weighting} where, recalling that class $+1$ is the minority class, we have \begin{equation}
    \omega_i = \begin{cases}
        1, & y_i = -1 \\ 
        \rho, & y_i = +1.\\ 
    \end{cases}
\end{equation}
Although weighting of groups or other subsets of the data fits into our framework, we focus on class weighting for clarity.
We denote the solution to this problem by $(\hat{\vectr{\theta}}(\rho), \hat b(\rho))$. Note that we can reparameterize this solution as $(\hat\alpha(\rho),\hat\gamma(\rho), \hat b(\rho))$ with $\alpha$ and $\gamma$ defined as in \cref{sec:setup}.

We begin by showing how this ($d\!+\!1$)-dimensional optimization can be reduced to the solution of a system of scalar equations.
\begin{theorem}[Weighted ERM Solution]
\label{thm:werm_sys}
    For $\delta \in (0,1)$ and a proper, convex loss $\ell$, \cref{eq:werm_basic} can be effectively reduced to a four dimensional system of equations in $\alpha, \gamma, \lambda, b$ given by
    \begin{subequations}\label{eq:werm_sys}
\begin{align}
     \delta(\alpha^2-\gamma^2) + 2\lambda^2\pi_+\E[\mathcal{M}'_{\ell,2}(-\alpha G + s\gamma + b;\lambda)] \nonumber \\ 
     + 2\lambda^2/\rho^2\pi_-\E[\mathcal{M}'_{\ell,2}(-\alpha G + s \gamma -b;\lambda/\rho)] &= 0\\
    \frac{\delta\gamma\rho}{\lambda} + 
    \pi_+ \rho s\E[\mathcal{M}'_{\ell,1}(-\alpha G + s\gamma + b;\lambda)] \nonumber\\    + \pi_-s \E[\mathcal{M}'_{\ell,1}(-\alpha G + s\gamma-b;\lambda/\rho)] &= 0 \\
    -\frac{\delta\alpha\rho}{\lambda} + \pi_+\rho\alpha\E[\mathcal{M}''_{\ell,1}(-\alpha G + s\gamma + b;\lambda)] \nonumber\\
    + \pi_-\alpha\E[\mathcal{M}''_{\ell,1}(-\alpha G +  s \gamma - b;\lambda/\rho)] &= 0 \\
    \pi_+\rho \E[\mathcal{M}'_{\ell,1}(-\alpha G + s\gamma+b;\lambda)]  \nonumber\\- \pi_-\E[\mathcal{M}'_{\ell,1}(-\alpha G + s\gamma-b;\lambda/\rho)] &= 0
\end{align}
\end{subequations}
with $G \sim \mathcal{N}(0,1)$.
Concretely, when the solution $(\alpha^*,\gamma^*, \lambda^*, b^*)$ to this system of equations is unique, then it satisfies   $(\hat\alpha, \hat\gamma, \hat b)\rightarrow(\alpha^*,\gamma^*,  b^*)$ as $n,d\rightarrow\infty$ and $d/n = \delta$.
\end{theorem}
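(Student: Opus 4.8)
The plan is to treat \cref{eq:werm_basic} as an instance of the framework behind the Convex Gaussian Min-max Theorem (CGMT) and to carry out the by-now-standard reduction: dualize the loss, isolate a Gaussian bilinear form, pass to the Auxiliary Optimization (AO), scalarize it, and read the stationarity conditions off the resulting deterministic problem.

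First I would write $\vectr{x}_i = y_i\vectr{\mu} + \vectr{z}_i$ with $\vectr{z}_i\sim\Nc(0,\matr{I})$ i.i.d., so that the margin is $y_i(\vectr{x}_i^\top\vectr{\theta}+b)=\vectr{\mu}^\top\vectr{\theta} + y_i\vectr{z}_i^\top\vectr{\theta} + y_i b$; by sign- and rotation-invariance of the Gaussian the rows $y_i\vectr{z}_i$ may be collected into a matrix $\matr{Z}$ with i.i.d.\ $\Nc(0,1)$ entries. Using $\ell=\ell^{**}$, i.e.\ $\ell(t)=\sup_u\{ut-\ell^*(u)\}$, recasts \cref{eq:werm_basic} as the convex--concave problem $\min_{\vectr{\theta},b}\max_{\vectr{u}}\tfrac1n\big(\vectr{u}^\top\operatorname{diag}(\vectr{\omega})\matr{Z}\vectr{\theta} + (\text{terms separable over }i)\big)$, the separable part carrying the mean term $s\gamma$, the signed biases $\pm b$, and the conjugates $\omega_i\ell^*(u_i)$. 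I would then pass to summary statistics by writing $\vectr{\theta}=\tfrac{\gamma}{s}\vectr{\mu}+\vectr{r}$ with $\vectr{r}\perp\vectr{\mu}$, $\|\vectr{r}\|^2=\alpha^2-\gamma^2$; since $\matr{Z}\vectr{\theta}=\gamma\vectr{g}_1+\matr{Z}\vectr{r}$ with $\vectr{g}_1\sim\Nc(0,\matr{I}_n)$ independent of $\matr{Z}\vectr{r}$, the genuinely bilinear piece is $(\operatorname{diag}(\vectr{\omega})\vectr{u})^\top\matr{Z}\vectr{r}$ over $\vectr{r}$ in the $(d-1)$-dimensional subspace $\vectr{\mu}^\perp$, to which the CGMT applies.

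After verifying the convexity/concavity requirements and discharging the compactness hypothesis in the usual way --- an a-priori bound on the optimal cost confines $\hat\alpha,\hat\gamma,\hat b$ to a fixed compact set with high probability, on which one restricts --- I would apply the CGMT to replace that bilinear form by $\|\vectr{r}\|\,\vectr{g}^\top\operatorname{diag}(\vectr{\omega})\vectr{u}+\|\operatorname{diag}(\vectr{\omega})\vectr{u}\|\,\vectr{h}^\top\vectr{r}$, with $\vectr{g}\sim\Nc(0,\matr{I}_n)$, $\vectr{h}\sim\Nc(0,\matr{I}_{d-1})$, obtaining the AO. Scalarization then proceeds in two moves: (i) optimize over the \emph{direction} of $\vectr{r}$ at fixed norm, which --- using $\|\vectr{h}\|^2/n\to\delta$ --- generates the $\delta(\alpha^2-\gamma^2)$ term together with the $\delta\gamma/\lambda$ and $\delta\alpha/\lambda$ contributions; (ii) split the residual dual-norm term by a scalar $\lambda$ via $\|\cdot\|=\inf_{\lambda>0}\{\|\cdot\|^2/(2\lambda)+\lambda/2\}$, after which $\max_{\vectr{u}}$ decouples over coordinates into terms $\sup_{u_i}\{\omega_i u_i c_i-\omega_i\ell^*(u_i)-\tfrac{\omega_i^2}{2\lambda}u_i^2\}$. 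Each such term equals, by the Fenchel / infimal-convolution identity $\sup_u\{ux-\ell^*(u)-\tfrac{\lambda}{2}u^2\}=\Mc_\ell(x;\lambda)$, a Moreau envelope of $\ell$ at the margin $-\alpha G + s\gamma \pm b$, the two classes carrying Moreau parameters that differ by the weight factor $\rho$ (precisely $\lambda$ and $\lambda/\rho$ as in \cref{eq:werm_sys}). Finally, replacing the two per-class empirical averages by expectations over $G\sim\Nc(0,1)$ (law of large numbers, using $|\{i:y_i=\pm1\}|/n\to\pi_\pm$) leaves a deterministic scalar min-max problem in $(\alpha,\gamma,\lambda,b)$ once any leftover auxiliary scalar is eliminated through its own stationarity condition.

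The four equations of \cref{eq:werm_sys} are then precisely the first-order optimality conditions of this scalar problem in $\lambda$, $\gamma$, $\alpha$, and $b$ respectively --- justified by the $C^1$-smoothness of $\Mc$ in both of its arguments (\cref{app:moreau_partial}), together with one application of Gaussian integration by parts (Stein's lemma) that rewrites the $\E[G\,\Mc'_{\ell,1}(\cdot)]$ term produced by $\partial_\alpha$ as the $\E[\Mc''_{\ell,1}(\cdot)]$ term appearing in the third equation. To conclude I would invoke the deviation part of the CGMT: under the stated hypothesis that $(\alpha^*,\gamma^*,\lambda^*,b^*)$ is the unique solution of \cref{eq:werm_sys}, the AO has a unique optimizer in the summary coordinates, and the CGMT transfers this to the primary problem, yielding $(\hat\alpha,\hat\gamma,\hat b)\to(\alpha^*,\gamma^*,b^*)$ in probability. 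The main obstacle is making this CGMT step fully rigorous: establishing the a-priori boundedness that legitimizes restricting to a compact set, and --- more delicately --- upgrading convergence of optimal \emph{values} to convergence of the optimizer's summary statistics, which is exactly where the uniqueness hypothesis on the four scalar equations must be converted into the strict-convexity-type property the CGMT demands of the AO; carrying the bias $b$ and the asymmetric weight $\rho$ (hence $\lambda$ versus $\lambda/\rho$) consistently through all the steps is routine but bookkeeping-heavy.
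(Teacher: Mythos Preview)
Your proposal is correct and follows essentially the same CGMT-based route as the paper: dualize, apply CGMT to the Gaussian bilinear term, scalarize via the square-root trick to obtain Moreau envelopes, take the deterministic limit, and read off the first-order conditions (with Stein's lemma for the $\alpha$-equation). The only cosmetic differences are that the paper introduces the dual variable through a slack-variable/Lagrangian formulation rather than Fenchel conjugation, and carries out the orthogonal decomposition of $\vectr{\theta}$ \emph{after} applying CGMT rather than before; both orderings lead to the same scalar system.
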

The proof is presented in \cref{app:werm_sys} and relies on the convex Gaussian minimax theorem (CGMT) \cite{stojnic_framework_2013,thrampoulidis_regularized_2015} which is discussed in \cref{app:cgmt} and further in \cite{thrampoulidis2018precise}.

The expected Moreau envelope of $\ell$ can be estimated from samples, allowing the system in \cref{eq:werm_sys} to be efficiently solved using numerical methods like fixed-point iteration. For square loss, however, the Moreau envelope admits a closed form
\begin{equation}
    \mathcal{M}_{\ell_\text{square}}(x; \lambda) = \frac{1}{2}\frac{(x-1)^2}{1+\lambda},
\end{equation}
allowing us to reduce the above system and remove the randomness.

\begin{corollary}[Square Loss]
    \label{thm:sq_sys}
    Letting $\ell(z) = \ell_\text{square}(z)\triangleq \frac{1}{2}(z -1)^2$, we have the following set of equations as a simplification of \cref{eq:werm_sys}: 
\begin{subequations}
\label{eq:werm_sq_sys}
\begin{align}
     \pi_+ \frac{\lambda^2}{(1+\lambda)^2} ((s\gamma + b -1)^2 +\alpha^2) + \pi_- \frac{\lambda^2}{(\rho+\lambda)^2} ((s\gamma - b -1)^2 +\alpha^2) &= \delta(\alpha^2-\gamma^2) \\
    \pi_+ s(s\gamma + b -1) \frac{\lambda}{1+\lambda}
    + \pi_- s(s\gamma-b-1)  \frac{\lambda}{\rho+\lambda} &= -\delta\gamma \\
    \pi_+\frac{\lambda}{1+\lambda} 
    + \pi_-\frac{\lambda}{\rho+\lambda} &= \delta \\
    \pi_+\frac{ s\gamma+b-1}{1+\lambda} - \pi_-\frac{s\gamma-b-1}{\rho+\lambda} &= 0
\end{align}
\end{subequations}
\end{corollary}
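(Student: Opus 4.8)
This is a specialization of \cref{thm:werm_sys}, so the plan is purely computational: substitute the square-loss Moreau envelope and the partial derivatives that occur in \cref{eq:werm_sys}, evaluate the resulting one-dimensional Gaussian expectations in closed form, and clear denominators.

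\textbf{Step 1 (derivatives).} From $\mathcal{M}_{\ell_\text{square}}(x;\lambda)=\tfrac12\frac{(x-1)^2}{1+\lambda}$ I would first record the three quantities appearing in \cref{eq:werm_sys}:
\begin{equation*}
\mathcal{M}'_{\ell,1}(x;\lambda)=\frac{x-1}{1+\lambda},\qquad
\mathcal{M}''_{\ell,1}(x;\lambda)=\frac{1}{1+\lambda},\qquad
\mathcal{M}'_{\ell,2}(x;\lambda)=-\frac{1}{2}\frac{(x-1)^2}{(1+\lambda)^2},
\end{equation*}
all of which also agree with the general definitions in \cref{app:moreau_partial}. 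The one bookkeeping point is that every majority-class ($y=-1$) term in \cref{eq:werm_sys} is evaluated at smoothing level $\lambda/\rho$, so there $1+\lambda$ becomes $1+\lambda/\rho=(\rho+\lambda)/\rho$; this is what produces the $\rho+\lambda$ denominators in \cref{eq:werm_sq_sys}, and in the first equation it is also what lets the prefactor $2\lambda^2/\rho^2$ combine with $(1+\lambda/\rho)^{-2}$ to leave a clean $\lambda^2/(\rho+\lambda)^2$.

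\textbf{Step 2 (Gaussian moments and assembly).} In each term the first argument of the envelope is affine in $G\sim\mathcal{N}(0,1)$, of the form $-\alpha G+c$ with $c=s\gamma+b$ for the $+1$ terms and $c=s\gamma-b$ for the $-1$ terms. Hence $\E[-\alpha G+c-1]=c-1$ and $\E[(-\alpha G+c-1)^2]=\alpha^2+(c-1)^2$, using $\E[G]=0$ and $\E[G^2]=1$. Plugging Step 1 and these two identities into the four lines of \cref{eq:werm_sys} removes all randomness and yields, respectively: $\delta(\alpha^2-\gamma^2)=\pi_+\frac{\lambda^2}{(1+\lambda)^2}\big((s\gamma+b-1)^2+\alpha^2\big)+\pi_-\frac{\lambda^2}{(\rho+\lambda)^2}\big((s\gamma-b-1)^2+\alpha^2\big)$; then, after multiplying by $\lambda$ and dividing by $\rho$, $-\delta\gamma=\pi_+ s(s\gamma+b-1)\frac{\lambda}{1+\lambda}+\pi_- s(s\gamma-b-1)\frac{\lambda}{\rho+\lambda}$; then, after dividing by $\alpha\rho$ and multiplying by $\lambda$, $\delta=\pi_+\frac{\lambda}{1+\lambda}+\pi_-\frac{\lambda}{\rho+\lambda}$; and finally, after dividing by $\rho$, $\pi_+\frac{s\gamma+b-1}{1+\lambda}-\pi_-\frac{s\gamma-b-1}{\rho+\lambda}=0$. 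These are exactly \cref{eq:werm_sq_sys}.

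\textbf{Main obstacle.} There is no analytic obstacle — everything is a polynomial in $G$ with finite moments — so the only thing to be careful about is disciplined bookkeeping: tracking the $\lambda/\rho$ substitutions consistently and verifying that the final divisions are legitimate. Dividing the last three equations by the fixed weight $\rho>0$ is immediate; cancelling the common factor $\alpha$ in the third equation uses $\alpha=\|\hat{\vectr{\theta}}(\rho)\|>0$ together with $\lambda^*>0$, which hold for the nondegenerate solution described in \cref{thm:werm_sys} (the case $\alpha=0$ would force $\hat{\vectr{\theta}}=0$, which does not minimize \cref{eq:werm_basic} for this Gaussian model). If one prefers to avoid invoking nondegeneracy, the same system can be written without performing those divisions; the scaled form in \cref{eq:werm_sq_sys} is simply the convenient normalization for the closed-form analysis that follows.
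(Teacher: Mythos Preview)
Your proposal is correct and follows essentially the same approach as the paper's proof in \cref{app:werm_sq}: compute the Moreau envelope of square loss and its partial derivatives, substitute into \cref{eq:werm_sys}, evaluate the affine-in-$G$ Gaussian expectations, and simplify. Your bookkeeping of the $\lambda/\rho$ substitution and the divisions by $\rho$ and $\alpha$ is in fact more explicit than the paper's own write-up.
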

\begin{proof}[Proof Sketch]
    We use the closed form for the Moreau envelope of the square loss and evaluate all of the expectations utilizing the known data distribution. A full proof is provided in \cref{app:werm_sq}.
\end{proof}
While this system of equations is significantly simpler than that of \cref{eq:werm_sys}, it still does not admit a closed-form solution for general weighting. However, certain special cases including unweighted ERM allow for a closed-form solution. Substituting $\rho=1$ in \cref{eq:werm_sq_sys}, we obtain the following corollary:
\begin{corollary}[Unweighted Solution]
    \label{thm:sq_unfair}
    Letting $\rho=1$, we get the following solution to \cref{eq:werm_sq_sys}:
    \begin{subequations}
    \label{eq:werm_rho_1}
    \begin{align}
        \gamma^* &= \frac{ s (1-(\pi_- - \pi_+)^2)}{1+ s ^2(1 - (\pi_- - \pi_+)^2)} \\
        \lambda^* &= \frac{\delta}{1-\delta} \\
        b^* &= (\pi_- - \pi_+)( \gamma^*s - 1) \\
        \alpha^* &= \sqrt{\lambda^*(\pi_+(\gamma^*s +b^* -1)^2+ \pi_-(\gamma^* s - b^*  - 1)^2) + \frac{\gamma^{*2}}{1-\delta}}
    \end{align}
    \end{subequations}
\end{corollary}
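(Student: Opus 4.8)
The statement is a direct algebraic consequence of \cref{thm:sq_sys}: set $\rho = 1$ in the four equations of \cref{eq:werm_sq_sys} and solve them. The plan is to exploit the two simplifications that occur at $\rho=1$ — the factors $\lambda/(1+\lambda)$ and $\lambda/(\rho+\lambda)$ become equal, and $\pi_+ + \pi_- = 1$ — which together make the system \emph{triangular} in the order $\lambda \to b \to \gamma \to \alpha$, so each unknown is isolated by a single equation.

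First I would treat the third equation of \cref{eq:werm_sq_sys}: at $\rho=1$ it collapses to $(\pi_+ + \pi_-)\tfrac{\lambda}{1+\lambda} = \tfrac{\lambda}{1+\lambda} = \delta$, giving $\lambda^* = \delta/(1-\delta)$; I will also record the identities $\tfrac{\lambda^*}{1+\lambda^*} = \delta$, $\tfrac{1}{1+\lambda^*} = 1-\delta$, and $\bigl(\tfrac{\lambda^*}{1+\lambda^*}\bigr)^2 = \delta^2$, which are used below. Next, the fourth equation becomes $\pi_+(s\gamma+b-1) = \pi_-(s\gamma-b-1)$; writing $u \triangleq s\gamma - 1$ and expanding with $\pi_+ + \pi_- = 1$ yields $b = (\pi_- - \pi_+)u = (\pi_- - \pi_+)(s\gamma - 1)$, the claimed $b^*$.

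Substituting these two relations into the second equation, the common factor $s\,\tfrac{\lambda}{1+\lambda} = s\delta$ factors out and the bracket $\pi_+(s\gamma+b-1) + \pi_-(s\gamma-b-1)$ reduces to $u + (\pi_+ - \pi_-)b = u\bigl(1-(\pi_- - \pi_+)^2\bigr)$; hence the second equation becomes $s\delta(s\gamma-1)\bigl(1-(\pi_- - \pi_+)^2\bigr) = -\delta\gamma$. Cancelling $\delta \neq 0$ and solving the resulting linear equation in $\gamma$ gives exactly $\gamma^* = \tfrac{s(1-(\pi_- - \pi_+)^2)}{1+s^2(1-(\pi_- - \pi_+)^2)}$. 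Finally, the first equation is the only one involving $\alpha$; at $\rho=1$ its left-hand side equals $\delta^2\bigl[\pi_+(s\gamma+b-1)^2 + \pi_-(s\gamma-b-1)^2 + \alpha^2\bigr]$, and equating with $\delta(\alpha^2-\gamma^2)$, dividing by $\delta$, and solving the resulting quadratic for the nonnegative root $\alpha = \|\hat{\vectr{\theta}}\| \geq 0$ gives $\alpha^{*2} = \tfrac{\delta}{1-\delta}\bigl(\pi_+(s\gamma^*+b^*-1)^2 + \pi_-(s\gamma^*-b^*-1)^2\bigr) + \tfrac{\gamma^{*2}}{1-\delta}$, which is the stated formula after rewriting $\delta/(1-\delta) = \lambda^*$.

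No step is a genuine obstacle — this is a bookkeeping computation — so the only things I would take care to verify are that all denominators are nonzero ($1-\delta \neq 0$ because $\delta \in (0,1)$, $1 + s^2(1-(\pi_- - \pi_+)^2) > 0$, $1+\lambda^* \neq 0$), that the positive square root is the correct branch, and, so that the convergence conclusion of \cref{thm:werm_sys} applies, that the quadruple $(\alpha^*,\gamma^*,\lambda^*,b^*)$ is the \emph{unique} real solution of the $\rho=1$ system — which is immediate since the triangular reduction shows every solution must equal it. I would carry out the full computation in the appendix next to the proof of \cref{thm:sq_sys}.
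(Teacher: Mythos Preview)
Your proposal is correct and follows exactly the approach the paper indicates: the paper states just before \cref{thm:sq_unfair} that ``Substituting $\rho=1$ in \cref{eq:werm_sq_sys}, we obtain the following corollary'' and provides no further proof (merely citing \cite{mignacco_role_2020}), so your triangular solve $\lambda \to b \to \gamma \to \alpha$ is precisely the computation that the paper leaves to the reader.
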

Similar results have appeared in the literature \citep{mignacco_role_2020}, and we include it here for completeness.

The solution in \cref{thm:sq_unfair} captures the effect of overparameterization through $\delta$. We visualize this effect in \cref{fig:delta_1} where we also see that the small $\delta$ regime strongly favors the majority whereas as $\delta\rightarrow1$, the difference between classes dissipates (at the cost of increasing WCE). Its behavior as a function of $\pi_+$ is shown in \cref{fig:unfair_pi}.
\begin{figure}
\begin{subfigure}{0.48\linewidth}
    \centering
    \includegraphics[width=\linewidth]{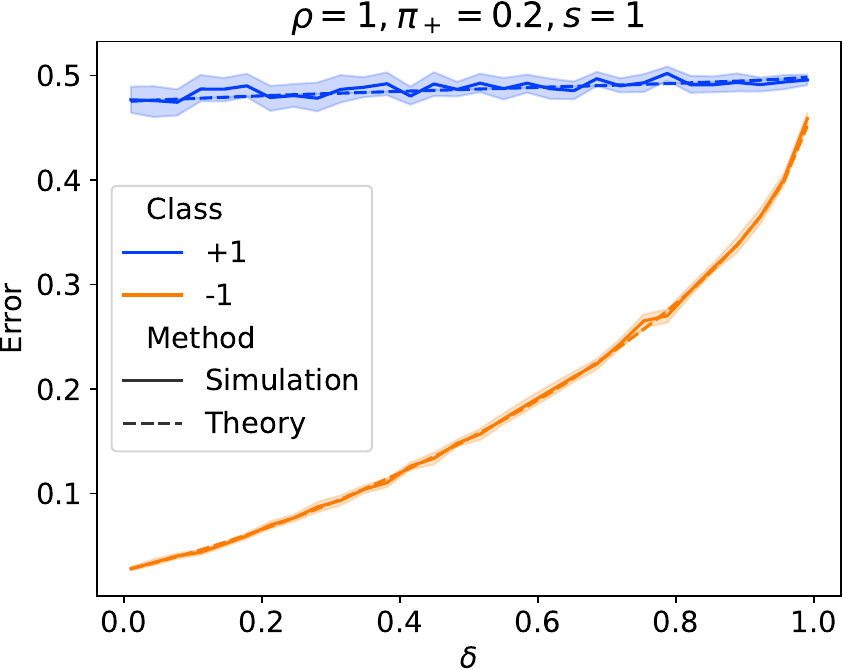}
    \caption{$\pi_+=0.2$}
    \label{fig:delta_0.2}
\end{subfigure}\hfill%
\begin{subfigure}{0.48\linewidth}
    \centering
    \includegraphics[width=\linewidth]{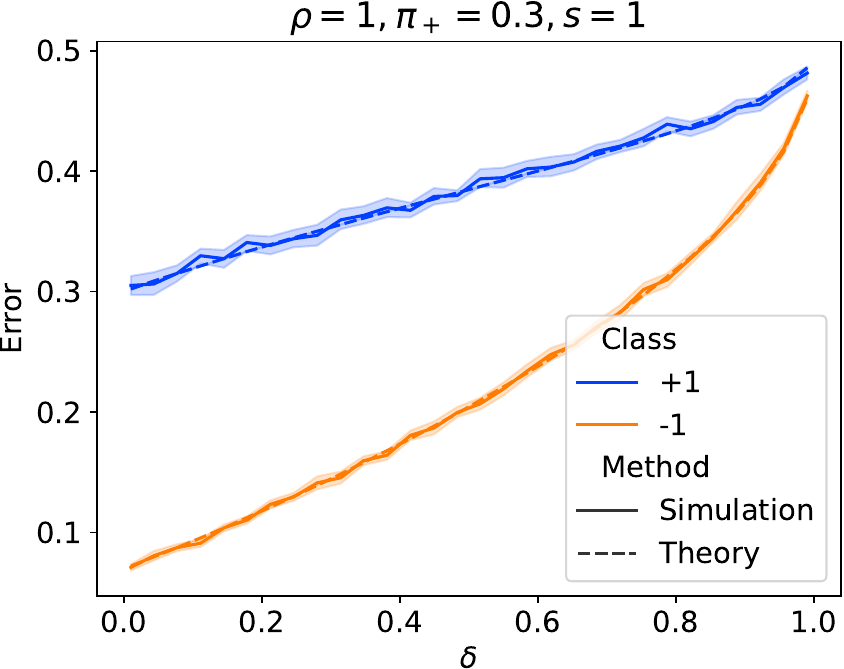}
    \caption{$\pi_+=0.3$}
    \label{fig:delta_0.3}
\end{subfigure}
\caption{Plot of per-class test error as a function of $\delta$ on a class-conditional Gaussian dataset. We see that simulation matches the theoretical closed form across all $\delta$ values and different imbalances $\pi_+$. Both classes perform more poorly for growing overparameterization. The majority always outperforms the minority, however.}
\label{fig:delta_1}
\end{figure}

An interesting case is when we want to equalize the error of each class. From \cref{eq:risk}, we obtain this by setting $b=0$; this, in turn, corresponds to choosing a specific $\rho$ which yields $b^*=0$. Starting from \Cref{thm:sq_sys}, we obtain the following result for the equal error settings. 
\begin{theorem}[Equal Error Solution]
    \label{thm:sq_fair}
    Assuming $\delta<2\pi_+$, let $\rho=\tilde\rho$ defined as
    \begin{align}
    \tilde \rho &\triangleq \frac{\pi_-}{\pi_+} + \left(\frac{\pi_-}{\pi_+}-1\right)\frac{\delta}{2\pi_+-\delta}
    \label{eq:rho_tilde}
    \end{align}
     Then, the solution to \cref{eq:werm_sq_sys} is given by
    \begin{subequations}
    \begin{align}
        b^* &= 0 \\
        \gamma^* &= \frac{ s }{1+ s ^2} \\
        \alpha^* &= \sqrt{\frac{\Delta}{1-\Delta}(\gamma^* s  - 1 )^2 + \frac{\gamma^{*2}}{1-\Delta}}\,,
    \end{align}
    \end{subequations}
    where we define $\Delta := \frac{\delta}{4\pi_+} + \frac{\delta}{4\pi_-}$ for convenience.
    We can write the worst-class error of wERM using $\tilde\rho$ and square loss as \begin{equation}
        Q\left(\frac{s^2\sqrt{1-\Delta} }{\sqrt{\Delta+s^2}}\right)\,.
    \end{equation}
\end{theorem}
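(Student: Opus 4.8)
The plan is to specialize the square-loss system \cref{eq:werm_sq_sys} of \cref{thm:sq_sys} to the constraint $b=0$. Since that system already consists of four equations in $(\alpha,\gamma,\lambda,b)$ for a fixed $\rho$, adjoining $b=0$ leaves four equations in the four unknowns $(\alpha,\gamma,\lambda,\rho)$, which I would solve in cascade: extract $\lambda$ and $\rho=\tilde\rho$ from the two ``balance'' equations (the third and fourth lines of \cref{eq:werm_sq_sys}), then read off $\gamma^*$ from the second line, then $\alpha^*$ from the first line, and finally assemble the worst-class error from \cref{eq:risk}.

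First (finding $\lambda^*$ and $\tilde\rho$): setting $b=0$ in the fourth equation of \cref{eq:werm_sq_sys} gives $(s\gamma-1)\big(\tfrac{\pi_+}{1+\lambda}-\tfrac{\pi_-}{\rho+\lambda}\big)=0$. I would first rule out the degenerate branch $s\gamma=1$: substituting it into the second equation (with $b=0$) forces $\delta\gamma=0$, hence $\gamma=0$, contradicting $s\gamma=1$ since $s=\|\vectr{\mu}\|>0$. Therefore $\tfrac{\pi_+}{1+\lambda}=\tfrac{\pi_-}{\rho+\lambda}=:c$, equivalently $\pi_+(\rho+\lambda)=\pi_-(1+\lambda)$. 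Plugging $c$ into the third equation collapses it to $2\lambda c=\delta$, i.e. $\tfrac{\lambda\pi_+}{1+\lambda}=\tfrac{\delta}{2}$, so $\lambda^*=\tfrac{\delta}{2\pi_+-\delta}$; this is exactly where the hypothesis $\delta<2\pi_+$ enters, guaranteeing $\lambda^*>0$ as a valid Moreau parameter. Substituting $\lambda^*$ back into $\pi_+(\rho+\lambda)=\pi_-(1+\lambda)$ and simplifying yields $\rho=\tilde\rho$ of \cref{eq:rho_tilde} (and in passing $\tilde\rho>\pi_-/\pi_+>1$, so it is an admissible weight).

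Next ($\gamma^*$, $\alpha^*$, and the error): with $b=0$ and the identity $c=\tfrac{\pi_+}{1+\lambda}=\tfrac{\pi_-}{\rho+\lambda}$, the second equation factors as $s(s\gamma-1)\cdot 2\lambda c=-\delta\gamma$; using $2\lambda c=\delta$ this reads $s(s\gamma-1)=-\gamma$, hence $\gamma^*=\tfrac{s}{1+s^2}$. The first equation, with $b=0$, factors as $\big((s\gamma-1)^2+\alpha^2\big)\big(\pi_+\tfrac{\lambda^2}{(1+\lambda)^2}+\pi_-\tfrac{\lambda^2}{(\rho+\lambda)^2}\big)=\delta(\alpha^2-\gamma^2)$; the bracket equals $\lambda^2c^2\big(\tfrac{1}{\pi_+}+\tfrac{1}{\pi_-}\big)=\tfrac{(\lambda c)^2}{\pi_+\pi_-}$, and with $\lambda c=\delta/2$ and $\pi_++\pi_-=1$ this is precisely $\delta\Delta$ with $\Delta=\tfrac{\delta}{4\pi_+}+\tfrac{\delta}{4\pi_-}=\tfrac{\delta}{4\pi_+\pi_-}$. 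Solving the resulting linear equation for $\alpha^2$ gives $\alpha^{*2}=\tfrac{\Delta}{1-\Delta}(\gamma^*s-1)^2+\tfrac{\gamma^{*2}}{1-\Delta}$, where $\Delta<1$ holds automatically since $\pi_+<\pi_-$ forces $\pi_+<\tfrac12$, hence $4\pi_+\pi_-=4\pi_+(1-\pi_+)>2\pi_+>\delta$. Finally, since $b^*=0$, \cref{eq:risk} gives $\mathcal{R}_+=\mathcal{R}_-=Q(\gamma^*s/\alpha^*)$; substituting $\gamma^*=s/(1+s^2)$ (so $\gamma^*s-1=-1/(1+s^2)$ and $\gamma^{*2}=s^2/(1+s^2)^2$) into the formula for $\alpha^{*2}$ gives $\alpha^{*2}=\tfrac{\Delta+s^2}{(1-\Delta)(1+s^2)^2}$, whence $\gamma^*s/\alpha^*=s^2\sqrt{1-\Delta}/\sqrt{\Delta+s^2}$, as claimed.

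I do not expect a genuine obstacle here: the argument is essentially bookkeeping, and the one trick that makes everything telescope is the substitution $c:=\tfrac{\pi_+}{1+\lambda}=\tfrac{\pi_-}{\rho+\lambda}$ together with $\lambda c=\delta/2$. The points that actually need care are (i) ruling out the degenerate branch $s\gamma=1$ so the factorization in the fourth equation is legitimate, (ii) carrying out the algebraic simplifications cleanly, and (iii) verifying admissibility of the solution ($\lambda^*>0$, $\Delta<1$, $\alpha^{*2}>0$), which is precisely what $\delta<2\pi_+$ buys. One should also note that this $(\alpha^*,\gamma^*,\lambda^*,b^*)$ is the unique solution of \cref{eq:werm_sq_sys} at $\rho=\tilde\rho$, so that the convergence guarantee of \cref{thm:werm_sys} transfers and the displayed $Q(\cdot)$ is indeed the limiting worst-class error.
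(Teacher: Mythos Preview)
Your proposal is correct and follows essentially the same approach as the paper, which simply states that one substitutes $\tilde\rho$ into \cref{eq:werm_sq_sys} and solves for $(\lambda,\gamma,b,\alpha)$, then plugs into \cref{eq:risk}. Your version is in fact more informative: rather than taking $\tilde\rho$ as given and verifying $b^*=0$, you impose $b=0$, derive $\tilde\rho$ and $\lambda^*=\delta/(2\pi_+-\delta)$ from the balance equations via the substitution $c:=\pi_+/(1+\lambda)=\pi_-/(\rho+\lambda)$ and $2\lambda c=\delta$, and then cascade to $\gamma^*$ and $\alpha^*$; you also explicitly rule out the spurious branch $s\gamma=1$ and check the admissibility constraints $\lambda^*>0$ and $\Delta<1$, which the paper leaves implicit.
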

\begin{proof}
    Substituting $\tilde\rho$ into \cref{eq:werm_sq_sys} we can solve for $\lambda, \gamma, b, \alpha$. To get the WCE we substitute into \cref{eq:risk}.
\end{proof}
We see that the form of $\tilde\rho$ is the classical ratio of priors plus an offset that depends only on the overparameterization ratio $\delta$ and the minority prior $\pi_+$. Note that this offset is 0 for balanced classes regardless of parameterization.

We see in \cref{fig:rho_opt_theory} that $\tilde \rho$ aligns with the crossover of the per-class errors in theory and in simulation of a class-conditional Gaussian with $s=2,\delta=0.2,$ and $\pi_+=0.2$. Additionally, we see the monotonicity required by \cref{thm:opt_rho}. This results in $\tilde\rho$ (red) strongly outperforming the population-optimal weighting marked in black. This suggests that using a stronger weight should be helpful in practical settings where $d$ and $n$ are on similar scales. We explore this further in \cref{sec:experiments}.
\begin{figure}
    \centering
    \includegraphics[width=0.8\linewidth]{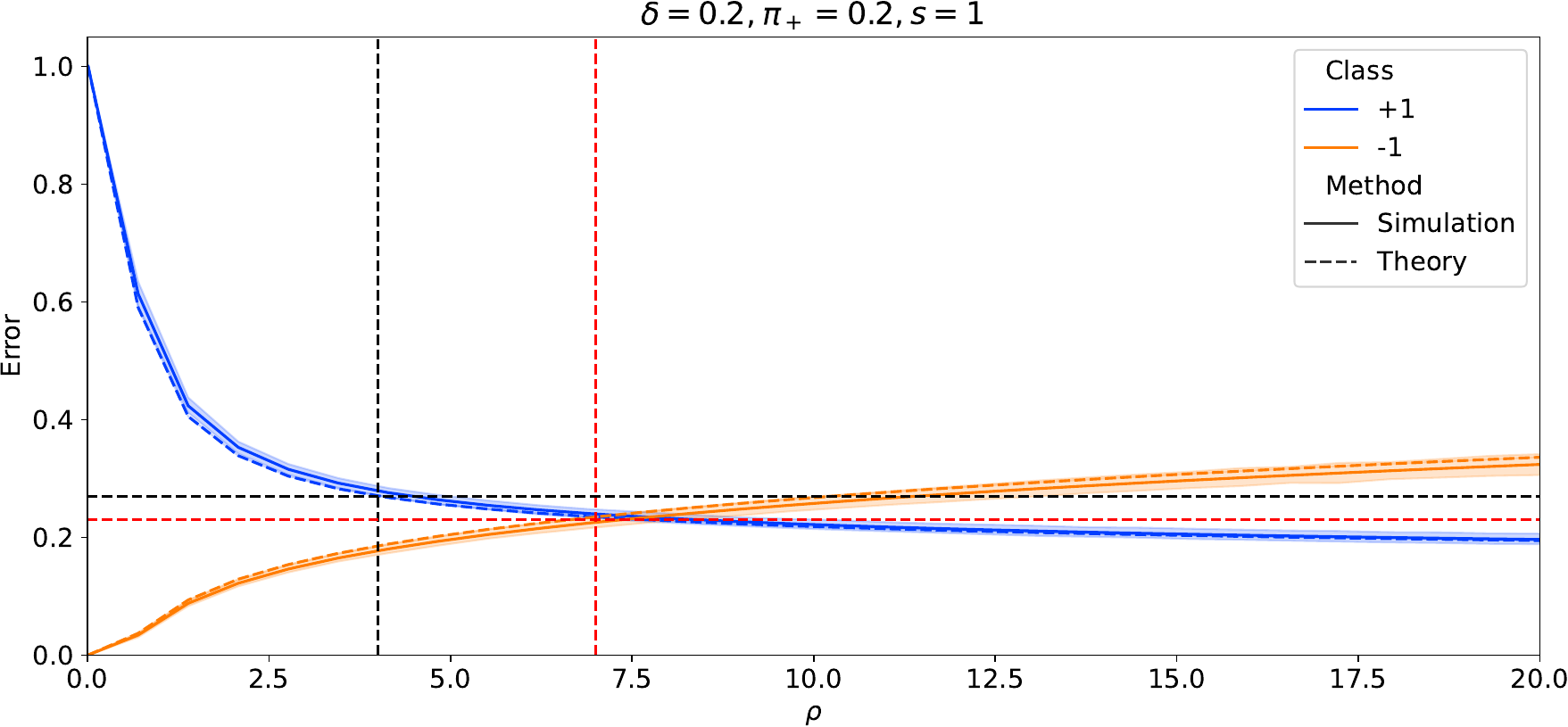}
    \caption{Per-class test error plotted against the weight ratio $\rho$ on a synthetic dataset. We see that not only does our theoretical formulation match the simulation results, $\tilde\rho$ (red) strongly outperforms the conventional ratio of priors (black). Note that the worst-class error for each weighting scheme is marked with a horizontal line.}
    \label{fig:rho_opt_theory}
\end{figure}

It is worth noting that this $\tilde\rho$ for WCE does not align with the conventional wisdom of the ratio of the priors except as $\delta\rightarrow 0$, which is the case that $n\rightarrow\infty$ for a fixed $d$. This aligns with previous population results such as \cite{welfert2024theoretical}. We compare the ratio of the priors to $\tilde\rho$ in \cref{fig:rho_opt_theory} and indeed see that the ratio of the priors is suboptimal for this $\delta > 0$.

We additionally see in \cref{thm:sq_fair} that the solution $\tilde \rho$ is only valid in the setting that $\delta < 2\pi_+$. This is an interesting restriction, as it suggests that overparameterization affects the ability of weighting to correct for class imbalances. Indeed we see in \cref{fig:rho_asymptote} that when $\delta > 2\pi_+$, the two per-class risks never intersect as a function of $\rho$, and thus the WCE is dominated only by the minority class. In this setting, the optimal choice of weighting is $\rho\rightarrow\infty$.
\begin{figure}
\begin{subfigure}[t]{0.48\linewidth}
    \centering
    \includegraphics[width=\linewidth]{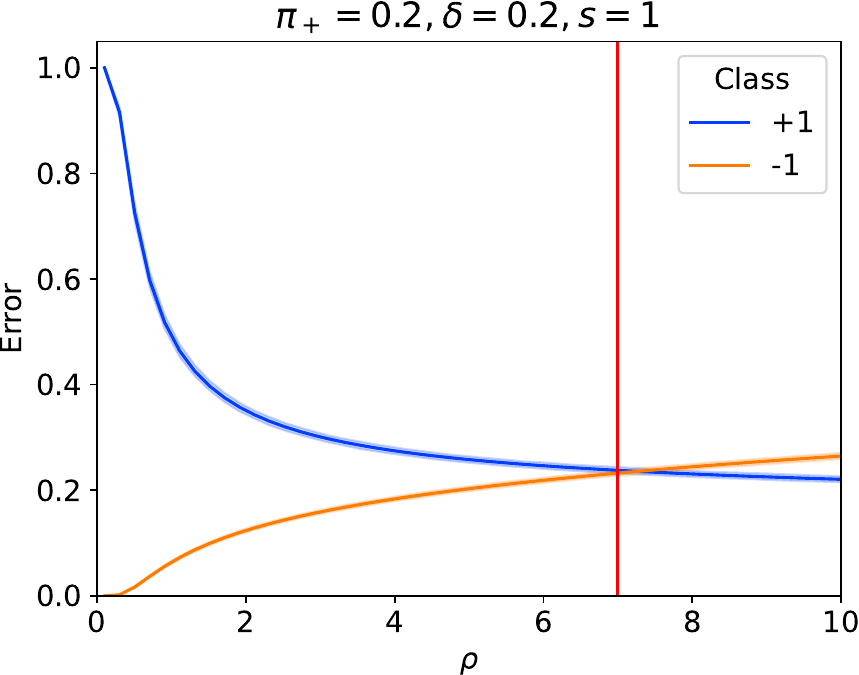}
    \caption{$\delta<2\pi_+$}
    \label{fig:rho_opt}
\end{subfigure}\hfill%
\begin{subfigure}[t]{0.48\linewidth}
    \centering
    \includegraphics[width=\linewidth]{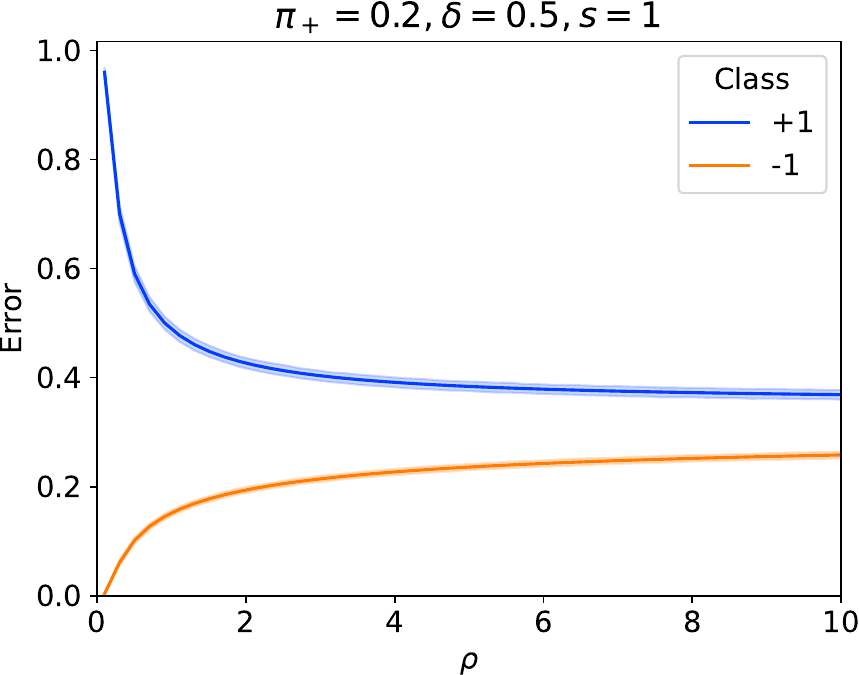}
    \caption{$\delta>2\pi_+$}
    \label{fig:rho_asymptote}
\end{subfigure}
\caption{Effect of $\rho$ on the per-class test error on a synthetic dataset. We see the restriction for $\delta < 2\pi_+$ for $\tilde\rho$ (red) to be defined has an operational meaning regarding the per-class errors. Indeed if $\delta>2\pi_+$, the per-class errors never meet and thus cannot be balanced.}
\end{figure}

With the assumption that the per-class risks are monotonic in $\rho$ (which holds in our simulations), this choice of $\rho$ is optimal in terms of WCE. 
While the assumption above may seem strong, it is intuitive that increasing the weight on a class should decrease the risk for that class and increase the risk for the opposite class. 
\begin{theorem}[Optimality of $\tilde\rho$]
\label{thm:opt_rho}
    Assume that $\mathcal{R}_+$ and $\mathcal{R}_-$ are monotonically decreasing and increasing respectively in the weight parameter $\rho$. Then $\tilde\rho$ minimizes WCE over all choices of $\rho$ for a given $\mu$ and $\delta < 2\pi_+$.
\end{theorem}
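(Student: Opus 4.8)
The plan is to reduce the statement to the elementary fact that the pointwise maximum of a nonincreasing function and a nondecreasing function is minimized at a point where the two functions agree, and to identify that crossover point with $\tilde\rho$ using \Cref{thm:sq_fair}. First I would record that, under the hypothesis $\delta < 2\pi_+$, the weight $\tilde\rho$ of \cref{eq:rho_tilde} is well defined and feasible (indeed $2\pi_+-\delta>0$ and $\pi_->\pi_+$ force $\tilde\rho>1>0$). By \Cref{thm:sq_fair}, the asymptotic solution of the weighted problem at $\rho=\tilde\rho$ has $b^*=0$; substituting $b^*=0$ into \cref{eq:risk} gives $\mathcal{R}_+(\tilde\rho)=\mathcal{R}_-(\tilde\rho)=Q(\gamma^* s/\alpha^*)$, so by \cref{eq:wce} we get $\mathrm{WCE}(\tilde\rho)=\mathcal{R}_+(\tilde\rho)=\mathcal{R}_-(\tilde\rho)$.

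Next I would use the monotonicity hypothesis to lower bound $\mathrm{WCE}(\rho)$ by $\mathrm{WCE}(\tilde\rho)$ for an arbitrary $\rho$, splitting into two cases. If $\rho\le\tilde\rho$, then since $\mathcal{R}_+$ is nonincreasing in $\rho$ we have $\mathcal{R}_+(\rho)\ge\mathcal{R}_+(\tilde\rho)$, hence
\[
\mathrm{WCE}(\rho)=\max\{\mathcal{R}_+(\rho),\mathcal{R}_-(\rho)\}\ge \mathcal{R}_+(\rho)\ge \mathcal{R}_+(\tilde\rho)=\mathrm{WCE}(\tilde\rho).
\]
If instead $\rho\ge\tilde\rho$, then since $\mathcal{R}_-$ is nondecreasing in $\rho$ we have $\mathcal{R}_-(\rho)\ge\mathcal{R}_-(\tilde\rho)$, hence $\mathrm{WCE}(\rho)\ge \mathcal{R}_-(\rho)\ge\mathcal{R}_-(\tilde\rho)=\mathrm{WCE}(\tilde\rho)$. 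In either case $\mathrm{WCE}(\rho)\ge\mathrm{WCE}(\tilde\rho)$, which is precisely the claimed optimality of $\tilde\rho$ over all weights $\rho$.

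The bulk of the content is borrowed: the genuinely nontrivial input is \Cref{thm:sq_fair} (which in turn rests on \Cref{thm:werm_sys} and \Cref{thm:sq_sys}), and once $b^*=0$ at $\rho=\tilde\rho$ is in hand the remaining argument is a two-line case split. Accordingly, the only point requiring care is bookkeeping: the per-class risks $\mathcal{R}_\pm$ in \cref{eq:risk} must be understood as evaluated at the asymptotic weighted-ERM parameters $(\alpha^*(\rho),\gamma^*(\rho),b^*(\rho))$, and the monotonicity hypothesis must be read for exactly these asymptotic risks, so that \Cref{thm:sq_fair} and the hypothesis refer to the same objects. I would also add a one-sentence remark that if at least one of $\mathcal{R}_+,\mathcal{R}_-$ is \emph{strictly} monotone then $\tilde\rho$ is the unique minimizer, though this refinement is not needed for the stated result.
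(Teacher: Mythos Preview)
Your proposal is correct and essentially the same as the paper's proof: both establish that $\mathcal{R}_+(\tilde\rho)=\mathcal{R}_-(\tilde\rho)$ via \Cref{thm:sq_fair} and then use the monotonicity hypothesis in a two-sided case split around $\tilde\rho$. The only cosmetic difference is that the paper names the structure (``$\mathrm{WCE}$ is quasiconvex as the max of two monotone functions, so a local minimum is global'') whereas you carry out the direct inequality comparison; the underlying argument is identical.
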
 
\vspace{-5pt}
\begin{proof}
    For convenience, we overload WCE to be a function of weight $\rho$ directly.
    If both per-class risks are monotonic in $\rho$, WCE is a quasiconvex function in $\rho$ as it is the maximum of two monotonic functions. The choice of $\tilde\rho$ is a local minima of WCE as for any $\epsilon > 0$, $\text{WCE}(\tilde\rho+\epsilon) \ge \text{WCE}(\tilde\rho)$ as $\mathcal{R}_-$ is increasing and $\text{WCE}(\tilde\rho -  \epsilon) \ge \text{WCE}(\tilde\rho)$ because $\mathcal{R}_+$ is decreasing. Thus $\tilde\rho$ is a global minima of WCE by quasiconvexity.
\end{proof}
\subsection{Downsampled ERM}
While upweighting is one common form of cost-sensitive correction during LLR, another is downsampling. We specifically consider the variant of downsampling where the majority class is downsampled to the size of the minority class. 
We can adapt our existing results to capture the downsampling problem as a change of $n$ (therefore $\delta$) and priors.
\begin{corollary}[Downsampling]
\label{corr:ds}
    The solution to the downsampled problem is given by taking $\tilde \delta \triangleq\frac{\delta}{2 \pi_+}$ for $\delta$ in \cref{thm:werm_sys} and setting $\pi_+ = \pi_- = \frac{1}{2}$ and $\rho = 1$. For square loss, the closed form follows from \cref{thm:sq_unfair}.
\end{corollary}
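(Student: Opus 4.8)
The plan is to recognize the downsampled ERM problem as an ordinary (unweighted) ERM problem on a smaller, balanced dataset drawn from the same class-conditional Gaussian, and then invoke \cref{thm:werm_sys} (and \cref{thm:sq_unfair} for the square-loss closed form) directly with the appropriate substitutions. The substitutions themselves are forced: downsampling the majority class to the minority size removes samples but not features, so $d$ is unchanged while the sample count shrinks; the resulting dataset has equal numbers from each class, hence effective priors $1/2$; and no reweighting is applied, so $\rho=1$.

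First I would formalize the downsampling operation: retain all $n_+$ minority samples together with a uniformly random subset of $n_+$ of the $n_-$ majority samples, yielding a new dataset of size $\tilde n = 2 n_+$. Conditioned on the labels, each retained feature vector is still distributed as $\Nc(y_i\vectr{\mu},\mathbf{I})$, since the choice of which majority indices to keep is independent of the features given the label. Thus the rows of the downsampled design matrix $\matr{X}$ are i.i.d. draws from the generative model of \cref{sec:setup}, but now with balanced priors $\pi_+=\pi_-=\tfrac12$. Minimizing the unweighted empirical risk over this dataset is therefore literally an instance of \cref{eq:werm_basic} with $\rho=1$, sample count $\tilde n$, and equal priors — exactly the hypotheses of \cref{thm:werm_sys} modulo the value of the aspect ratio.

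Next I would track the overparameterization ratio. Since $n_+\sim\mathrm{Binomial}(n,\pi_+)$, we have $n_+/n\to\pi_+$ almost surely, so $\tilde n/n\to 2\pi_+$ and hence $d/\tilde n\to \delta/(2\pi_+)=:\tilde\delta$ in the joint limit $d,n\to\infty$ with $d/n=\delta$. Provided $\delta<2\pi_+$, so that $\tilde\delta\in(0,1)$, the hypotheses of \cref{thm:werm_sys} hold for the downsampled problem, and its conclusion yields convergence of $(\hat\alpha,\hat\gamma,\hat b)$ to the solution of \cref{eq:werm_sys} with $\delta\mapsto\tilde\delta$, $\pi_\pm\mapsto\tfrac12$, $\rho\mapsto 1$ (which merges the two Moreau-envelope terms onto a common $\lambda$). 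For square loss I simply substitute these same replacements into \cref{thm:sq_unfair}; because $\pi_- - \pi_+ = 0$ this collapses to $b^*=0$, $\gamma^* = s/(1+s^2)$, $\lambda^* = \tilde\delta/(1-\tilde\delta)$, and the corresponding $\alpha^*$.

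The main obstacle — the step I would treat most carefully — is the interaction between the randomness of the subsample and the asymptotic machinery underlying \cref{thm:werm_sys}: I need the reduced (random) sample size to concentrate so that $d/\tilde n$ has a deterministic limit, and I need to confirm that the CGMT argument only uses i.i.d. Gaussianity of the design rows, which is preserved after conditioning on the labels and on which majority indices are retained. I would also flag explicitly that the reduction is only meaningful when $\delta < 2\pi_+$; otherwise $\tilde\delta\ge 1$ and the downsampled problem leaves the inseparable, underparameterized regime covered by \cref{thm:werm_sys} — consistent with the $\delta<2\pi_+$ restriction already seen in \cref{thm:sq_fair}.
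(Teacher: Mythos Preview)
Your proposal is correct and follows exactly the line of reasoning the paper intends: the paper itself does not give a formal proof of this corollary, stating only that ``we can adapt our existing results to capture the downsampling problem as a change of $n$ (therefore $\delta$) and priors,'' and your argument makes this precise. Your additional care about the concentration of $n_+/n\to\pi_+$, the preservation of i.i.d.\ Gaussianity under label-conditional subsampling, and the implicit requirement $\delta<2\pi_+$ (so that $\tilde\delta\in(0,1)$) all fill in details the paper leaves to the reader.
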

We see an illustration of the relative strengths of these methods in \cref{fig:werm_vs_ds} which shows that the increased effective overparameterization for downsampling results in much higher error for both classes. 
Note that as $\delta$ approaches 0, the two methods perform equally. This aligns with the theory of cost-sensitive population risk minimization \cite{welfert2024theoretical}. The increasing gap between wERM and downsampling is indicative of the strength of wERM as overparameterization increases.
This is particularly important for highly imbalanced datasets as the effective overparameterization used in downsampling grows as $\frac{1}{\pi_+}$. 
\begin{figure}
    \begin{minipage}[t]{0.48\textwidth}
        \centering
        \includegraphics[width=\linewidth]{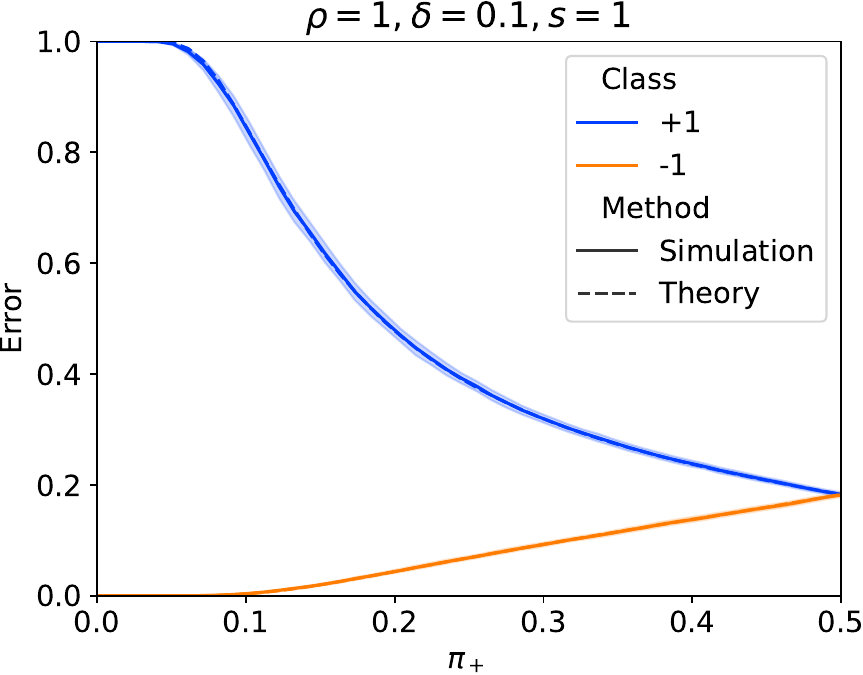}
        \caption{Per-class errors for unweighted ERM. We first note that simulation using SGD matches the theoretical risk across all $\pi_+$ values. The effect of imbalance is quite extreme and the minority error quickly skyrockets.}
        \label{fig:unfair_pi}
    \end{minipage}\hfill%
    \begin{minipage}[t]{0.48\textwidth}
    \centering
    \includegraphics[width=\linewidth]{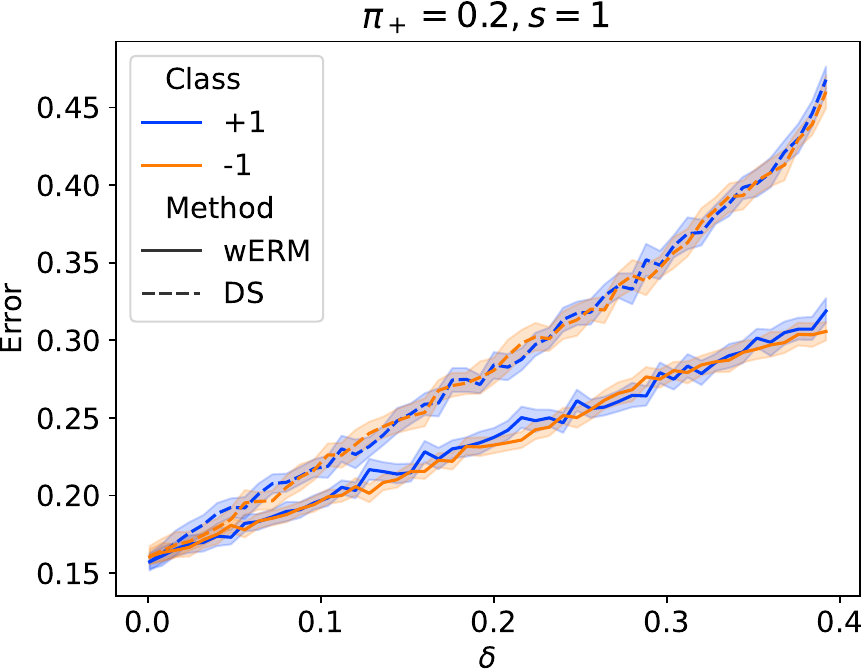}
    \caption{A comparison of wERM using the optimal weight $\tilde \rho$ and downsampling. We see that for a fixed $\delta$, the error for wERM is much lower than that of downsampling but that both achieve balanced per-class errors.}
    \label{fig:werm_vs_ds}
    \end{minipage}
\end{figure}

\section{Application to Imbalanced Image Classification}
\label{sec:experiments}
We have presented a theoretical framework for understanding how overparameterization affects the cost-sensitive learning methodologies, and we next explore this effect in practice for binary image classification tasks. Additional plots can be found in \cref{app:plots}. All relevant hyperparameters are discussed in \cref{app:exp_details}.

We consider the CelebA \citep{liu2015faceattributes} dataset which consists of images of celebrity faces, each marked with 40 binary attributes. We select \texttt{Straight Hair} as the attribute of interest with takes the value $1$ in 21\% of samples and the value $0$ in 79\% of samples. We also consider a binary version of CIFAR10 \citep{krizhevsky2009learning} with artificial imbalance, where class $-1$ is \texttt{truck} with 91\% of the data and $+1$ is \texttt{airplane} with 9\% of the data. We consider other classes and imbalance ratios within CIFAR10 in \cref{app:plots} and find qualitatively similar behavior.

We finetune a ResNet34 model on the training split of each dataset using cross entropy loss before retraining the final layer with varying $\rho$ on the validation split with square loss (aligning with theory). 
The focus on square loss may seem like a major restriction, but in practice, square loss is often as performant as cross entropy loss, especially in the low-data fine-tuning setting \citep{hui2020evaluation,achille2021lqf}.
To simulate different $\delta$ values, since the model size is fixed for LLR, we subsample the validation data uniformly to size $n$. For each $n$, this retraining is repeated 10 times with different subsamples to get confidence intervals on the captured metrics. We note that CIFAR10 does not provide a validation split, so we create one from fixed 10\% split of the test data. For this reason, we only consider up to $n=90$ for CIFAR10.

We note that while we choose the size of the latent space, this is not the $d$ that is used for calculating $\delta$. We observe in practice that the majority of the features in the latent space are irrelevant (see \cref{fig:pca} in \cref{app:plots} for PCA spectra). As such, we perform PCA to quantify the number of ``effective'' dimensions in the data as the number of features which capture 99\% of the variance. This is 3 dimensions for CelebA and 2 for CIFAR10. In general, the number of effective dimensions will be a function of task difficulty and model architecture.
The relatively poor usage of the latent space is a core observation of \citet{kirichenko_last_2023} and motivates sparse LLR. Here, we quantify the level of sparsity to better select importance weights.

We see in \cref{fig:celebA_sq} that while the $\tilde\rho$ calculated using the effective dimension does not perfectly predict the optimal empirical $\rho$, it captures the correct behavior as $n$ shrinks (and therefore $\delta$ increases). Note that for $n=1000$, the population-optimal weights are not empirically optimal, likely due to a small shift from the validation data to the test data or inadequate model generalization. For $n=20$, $\tilde\rho$ is distinctly larger than the ratio of the priors and achieves much better WCE. 
\begin{figure}
\begin{subfigure}{0.48\linewidth}
    \centering
    \includegraphics[width=\linewidth]{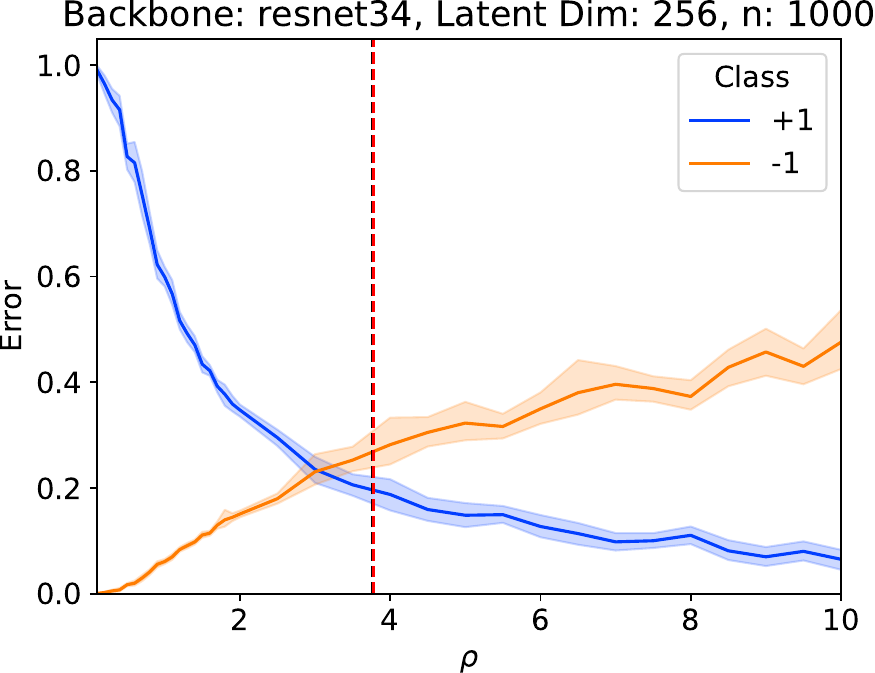}
    \caption{$n=1000$}
    \label{fig:celebA_1k}
\end{subfigure}\hfill%
\begin{subfigure}{0.48\linewidth}
    \centering
    \includegraphics[width=\linewidth]{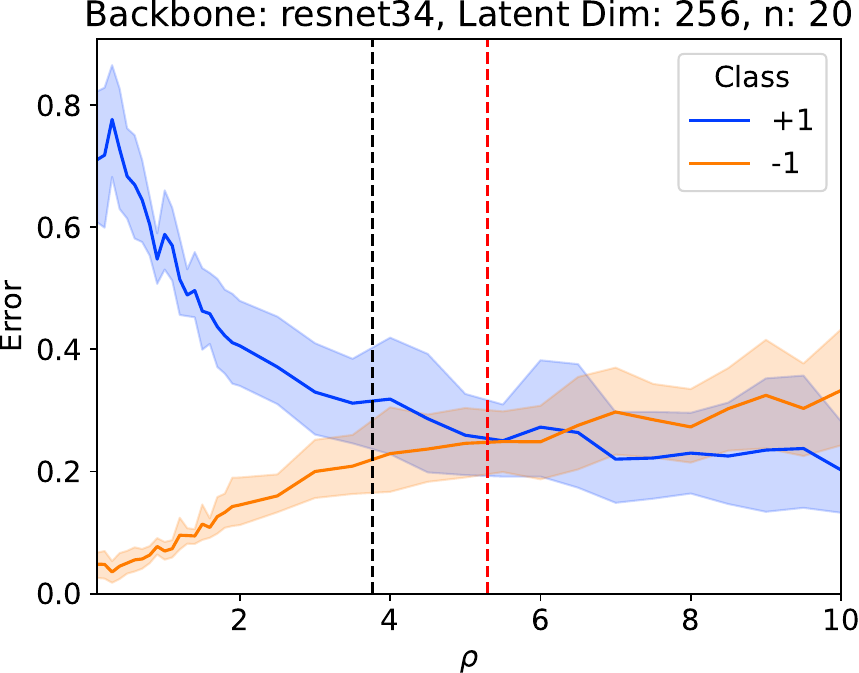}
    \caption{$n=20$}
    \label{fig:celebA_20}
\end{subfigure}
    \caption{Per-class accuracies for binary hair texture classification on the CelebA dataset, with the classical ratio of the priors $\rho$ marked in black and our $\tilde\rho$ in red. We see that the number of samples (thus $\delta$) affects the empirically optimal weighting, with larger $\delta$ requiring a larger weight on the minority class. This aligns with the theoretical results of \cref{thm:sq_fair}.}
    \label{fig:celebA_sq}
\end{figure}
\begin{figure}
\begin{subfigure}{0.48\linewidth}
    \centering
    \includegraphics[width=\linewidth]{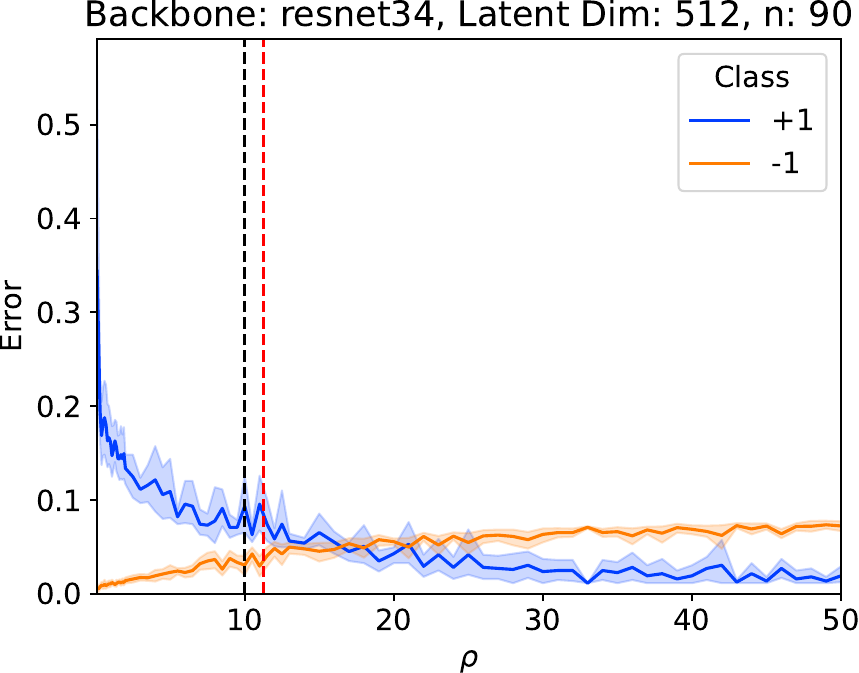}
    \caption{$n=90$}
\end{subfigure}\hfill%
\begin{subfigure}{0.48\linewidth}
    \centering
    \includegraphics[width=\linewidth]{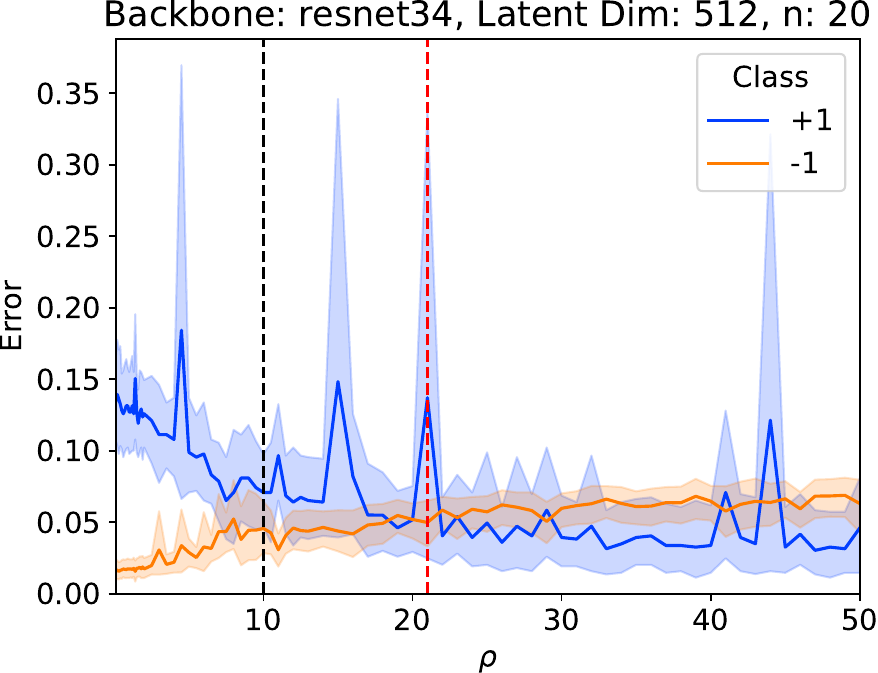}
    \caption{$n=20$}
\end{subfigure}
    \caption{Per-class accuracies for binary classification on the CIFAR10 dataset (planes vs trucks), with the classical ratio of the priors $\rho$ marked in black and our $\tilde\rho$ in red. We see that the number of samples (thus $\delta$) affects the empirically optimal weighting, with larger $\delta$ requiring a larger weight on the minority class. This aligns with the theoretical results of \cref{thm:sq_fair}.}
    \label{fig:CIFAR10_sq}
\end{figure}

In \cref{fig:CIFAR10_sq}, for the CIFAR10 dataset, we see that $\tilde\rho$ calculated from the effective latent dimension (red) predicts the empirical optimal $\rho$ quite well, and is certainly better in terms of WCE than the classical ratio of the priors (black). While there is significant noise with only $n=20$ retraining samples (so only 1 or 2 minority examples), weighting with $\tilde\rho$ allows us to recover a more balanced classifier.

Ultimately, we see that the optimal weighting scheme is effective across different datasets and overparameterization levels, including other choices of class for CIFAR10 (see \cref{app:plots}). This suggests that our method is quite general and could be useful in practice, acting as a default weighting rather than the classical ratio of the priors. 
\section{Limitations and Broader Impacts}
\paragraph{Limitations} While it is common to assume that the latent space of a deep model is Gaussian, a more sophisticated mixture could be useful in explaining class-imbalanced learning under spurious correlation or related settings. Our analysis is also limited to isotropic noise, but we hope to extend this in future works. Despite these assumptions in the analysis, we see that in real datasets, where neither of these hold, the derived $\tilde\rho$ can still be useful in selecting an appropriate weighting during last layer retraining.

While our provided analysis is able to effectively capture the behavior of real models under last layer retraining with square loss, providing a clear prescription for weighting depends on the ``effective'' dimension of the latent data. We utilize PCA, but this is rather heuristic. A weighting methodology which learns this effective dimension in a more principled manner could increase the practical application of our findings and is a major focus of our future work.

\paragraph{Broader impacts} There is ongoing discussion about whether improving the performance of minority classes at the expense of majority classes is desirable. However, in critical applications such as disease or rare event detection, our weighting scheme could result in significant gains on rare classes which often have an outsized real-world impact.
\section{Conclusion}
In this work, we have demonstrated the efficacy of loss weighting in the last layer retraining setting and derived an optimal weighting scheme which takes into account overparameterization. Our results close a wide gap in the literature between the population (many samples) setting and the overparameterized (few samples) setting and provide new insights into this highly relevant regime.

In practice, we show that this novel weighting scheme outperforms the classical ratio of priors in vision tasks, but that a different notion of dimension needs to be taken into account. This compensates for the fact that many dimensions of the last layer of large models go unutilized, a core observation of previous works. 
Our prescription for loss weighting is general in that it outperforms the ratio of the priors across datasets, latent dimensions, and imbalance ratios in real-world settings.
Our findings not only challenge the assumption that the classical ratio of priors is optimal but also provides a pathway towards more balanced model retraining by leveraging this effective dimension. We believe these insights will extend the benefits of optimal loss weighting to more complex real-world fine-tuning applications. 
\newpage
{
\small
\bibliographystyle{unsrtnat}
\bibliography{references,main}

\begin{thebibliography}{29}
\providecommand{\natexlab}[1]{#1}
\providecommand{\url}[1]{\texttt{#1}}
\expandafter\ifx\csname urlstyle\endcsname\relax
  \providecommand{\doi}[1]{doi: #1}\else
  \providecommand{\doi}{doi: \begingroup \urlstyle{rm}\Url}\fi

\bibitem[Kirichenko et~al.(2023)Kirichenko, Izmailov, and Wilson]{kirichenko_last_2023}
Polina Kirichenko, Pavel Izmailov, and Andrew~Gordon Wilson.
\newblock Last {Layer} {Re}-{Training} is {Sufficient} for {Robustness} to {Spurious} {Correlations}.
\newblock In \emph{The {Eleventh} {International} {Conference} on {Learning} {Representations}}, 2023.
\newblock URL \url{https://openreview.net/forum?id=Zb6c8A-Fghk}.

\bibitem[Izmailov et~al.(2022)Izmailov, Kirichenko, Gruver, and Wilson]{izmailov_feature_2022}
Pavel Izmailov, Polina Kirichenko, Nate Gruver, and Andrew~Gordon Wilson.
\newblock On {Feature} {Learning} in the {Presence} of {Spurious} {Correlations}, October 2022.
\newblock URL \url{http://arxiv.org/abs/2210.11369}.
\newblock arXiv:2210.11369 [cs].

\bibitem[Welfert et~al.(2024)Welfert, Stromberg, and Sankar]{welfert2024theoretical}
Monica Welfert, Nathan Stromberg, and Lalitha Sankar.
\newblock Theoretical guarantees of data augmented last layer retraining methods.
\newblock In \emph{2024 IEEE International Symposium on Information Theory (ISIT)}, pages 581--586. IEEE, 2024.

\bibitem[Chaudhuri et~al.(2023)Chaudhuri, Ahuja, Arjovsky, and Lopez-Paz]{chaudhuri2023ds}
Kamalika Chaudhuri, Kartik Ahuja, Martin Arjovsky, and David Lopez-Paz.
\newblock Why does throwing away data improve worst-group error?
\newblock In Andreas Krause, Emma Brunskill, Kyunghyun Cho, Barbara Engelhardt, Sivan Sabato, and Jonathan Scarlett, editors, \emph{Proceedings of the 40th International Conference on Machine Learning}, volume 202 of \emph{Proceedings of Machine Learning Research}, pages 4144--4188. PMLR, 23--29 Jul 2023.

\bibitem[Menon et~al.(2021)Menon, Jayasumana, Rawat, Jain, Veit, and Kumar]{menon_long-tail_2021}
Aditya~Krishna Menon, Sadeep Jayasumana, Ankit~Singh Rawat, Himanshu Jain, Andreas Veit, and Sanjiv Kumar.
\newblock Long-tail learning via logit adjustment, July 2021.
\newblock URL \url{http://arxiv.org/abs/2007.07314}.
\newblock arXiv:2007.07314 [cs, stat].

\bibitem[Cao et~al.(2019)Cao, Wei, Gaidon, Arechiga, and Ma]{cao_learning_2019}
Kaidi Cao, Colin Wei, Adrien Gaidon, Nikos Arechiga, and Tengyu Ma.
\newblock Learning {Imbalanced} {Datasets} with {Label}-{Distribution}-{Aware} {Margin} {Loss}, October 2019.
\newblock URL \url{http://arxiv.org/abs/1906.07413}.
\newblock arXiv:1906.07413 [cs, stat].

\bibitem[Ye et~al.(2021)Ye, Xie, Cai, Li, Li, and Wang]{ye_towards_2021}
Haotian Ye, Chuanlong Xie, Tianle Cai, Ruichen Li, Zhenguo Li, and Liwei Wang.
\newblock Towards a {Theoretical} {Framework} of {Out}-of-{Distribution} {Generalization}.
\newblock November 2021.
\newblock URL \url{https://openreview.net/forum?id=kFJoj7zuDVi}.

\bibitem[Lyu et~al.(2025)Lyu, Zhou, and Zhong]{lyu_statistical_2025}
Jingyang Lyu, Kangjie Zhou, and Yiqiao Zhong.
\newblock A statistical theory of overfitting for imbalanced classification, February 2025.
\newblock URL \url{http://arxiv.org/abs/2502.11323}.
\newblock arXiv:2502.11323 [math].

\bibitem[Kini et~al.(2021)Kini, Paraskevas, Oymak, and Thrampoulidis]{kini2021label}
Ganesh~Ramachandra Kini, Orestis Paraskevas, Samet Oymak, and Christos Thrampoulidis.
\newblock Label-imbalanced and group-sensitive classification under overparameterization.
\newblock \emph{Advances in Neural Information Processing Systems}, 34:\penalty0 18970--18983, 2021.

\bibitem[Lai and Muthukumar(2024)]{lai2024ood}
Kuo-Wei Lai and Vidya Muthukumar.
\newblock Sharp analysis of out-of-distribution error for “importance-weighted” estimators in the overparameterized regime.
\newblock In \emph{2024 IEEE International Symposium on Information Theory (ISIT)}, pages 3701--3706, 2024.
\newblock \doi{10.1109/ISIT57864.2024.10619252}.

\bibitem[Byrd and Lipton(2019)]{byrd2019iw}
Jonathon Byrd and Zachary Lipton.
\newblock What is the effect of importance weighting in deep learning?
\newblock In Kamalika Chaudhuri and Ruslan Salakhutdinov, editors, \emph{Proceedings of the 36th International Conference on Machine Learning}, volume~97 of \emph{Proceedings of Machine Learning Research}, pages 872--881. PMLR, 09--15 Jun 2019.

\bibitem[Xu et~al.(2021{\natexlab{a}})Xu, Ye, and Ruan]{xu2021understanding}
Da~Xu, Yuting Ye, and Chuanwei Ruan.
\newblock Understanding the role of importance weighting for deep learning.
\newblock In \emph{International Conference on Learning Representations}, 2021{\natexlab{a}}.

\bibitem[Soudry et~al.(2018)Soudry, Hoffer, Nacson, Gunasekar, and Srebro]{soudry_implicit_2018}
Daniel Soudry, Elad Hoffer, Mor~Shpigel Nacson, Suriya Gunasekar, and Nathan Srebro.
\newblock The implicit bias of gradient descent on separable data.
\newblock \emph{The Journal of Machine Learning Research}, 19\penalty0 (1):\penalty0 2822--2878, January 2018.
\newblock ISSN 1532-4435.

\bibitem[Ji and Telgarsky(2019)]{ji_implicit_2019}
Ziwei Ji and Matus Telgarsky.
\newblock The implicit bias of gradient descent on nonseparable data.
\newblock In \emph{Proceedings of the {Thirty}-{Second} {Conference} on {Learning} {Theory}}, pages 1772--1798. PMLR, June 2019.
\newblock URL \url{https://proceedings.mlr.press/v99/ji19a.html}.
\newblock ISSN: 2640-3498.

\bibitem[Gunasekar et~al.(2018)Gunasekar, Lee, Soudry, and Srebro]{gunasekar_characterizing_2018}
Suriya Gunasekar, Jason Lee, Daniel Soudry, and Nathan Srebro.
\newblock Characterizing {Implicit} {Bias} in {Terms} of {Optimization} {Geometry}.
\newblock In \emph{Proceedings of the 35th {International} {Conference} on {Machine} {Learning}}, pages 1832--1841. PMLR, July 2018.
\newblock URL \url{https://proceedings.mlr.press/v80/gunasekar18a.html}.
\newblock ISSN: 2640-3498.

\bibitem[Xu et~al.(2021{\natexlab{b}})Xu, Ye, and Ruan]{xu_understanding_2021}
Da~Xu, Yuting Ye, and Chuanwei Ruan.
\newblock Understanding the role of importance weighting for deep learning, March 2021{\natexlab{b}}.
\newblock URL \url{http://arxiv.org/abs/2103.15209}.
\newblock arXiv:2103.15209 [cs].

\bibitem[Sagawa et~al.(2020)Sagawa, Raghunathan, Koh, and Liang]{sagawa_investigation_2020}
Shiori Sagawa, Aditi Raghunathan, Pang~Wei Koh, and Percy Liang.
\newblock An {Investigation} of {Why} {Overparameterization} {Exacerbates} {Spurious} {Correlations}, 2020.
\newblock URL \url{https://arxiv.org/abs/2005.04345}.

\bibitem[Behnia et~al.(2022)Behnia, Wang, and Thrampoulidis]{behnia_how_2022}
Tina Behnia, Ke~Wang, and Christos Thrampoulidis.
\newblock On how to avoid exacerbating spurious correlations when models are overparameterized, June 2022.
\newblock URL \url{http://arxiv.org/abs/2206.12739}.
\newblock arXiv:2206.12739 [cs].

\bibitem[LaBonte et~al.(2023)LaBonte, Muthukumar, and Kumar]{labonte2023towards}
Tyler LaBonte, Vidya Muthukumar, and Abhishek Kumar.
\newblock Towards last-layer retraining for group robustness with fewer annotations.
\newblock In \emph{Thirty-seventh Conference on Neural Information Processing Systems}, 2023.

\bibitem[Stromberg et~al.(2024)Stromberg, Ayyagari, Welfert, Koyejo, Nock, and Sankar]{stromberg2024for}
Nathan Stromberg, Rohan Ayyagari, Monica Welfert, Sanmi Koyejo, Richard Nock, and Lalitha Sankar.
\newblock For robust worst-group accuracy, ignore group annotations.
\newblock \emph{Transactions on Machine Learning Research}, 2024.
\newblock ISSN 2835-8856.

\bibitem[Stojnic(2013)]{stojnic_framework_2013}
Mihailo Stojnic.
\newblock A framework to characterize performance of {LASSO} algorithms, March 2013.
\newblock URL \url{http://arxiv.org/abs/1303.7291}.
\newblock arXiv:1303.7291 [cs].

\bibitem[Thrampoulidis et~al.(2015)Thrampoulidis, Oymak, and Hassibi]{thrampoulidis_regularized_2015}
Christos Thrampoulidis, Samet Oymak, and Babak Hassibi.
\newblock Regularized {Linear} {Regression}: {A} {Precise} {Analysis} of the {Estimation} {Error}.
\newblock In \emph{Proceedings of {The} 28th {Conference} on {Learning} {Theory}}, pages 1683--1709. PMLR, June 2015.
\newblock URL \url{https://proceedings.mlr.press/v40/Thrampoulidis15.html}.
\newblock ISSN: 1938-7228.

\bibitem[Thrampoulidis et~al.(2018)Thrampoulidis, Abbasi, and Hassibi]{thrampoulidis2018precise}
Christos Thrampoulidis, Ehsan Abbasi, and Babak Hassibi.
\newblock Precise error analysis of regularized $ m $-estimators in high dimensions.
\newblock \emph{IEEE Transactions on Information Theory}, 64\penalty0 (8):\penalty0 5592--5628, 2018.

\bibitem[Mignacco et~al.(2020)Mignacco, Krzakala, Lu, Urbani, and Zdeborova]{mignacco_role_2020}
Francesca Mignacco, Florent Krzakala, Yue Lu, Pierfrancesco Urbani, and Lenka Zdeborova.
\newblock The {Role} of {Regularization} in {Classification} of {High}-dimensional {Noisy} {Gaussian} {Mixture}.
\newblock In \emph{Proceedings of the 37th {International} {Conference} on {Machine} {Learning}}, pages 6874--6883. PMLR, November 2020.
\newblock URL \url{https://proceedings.mlr.press/v119/mignacco20a.html}.
\newblock ISSN: 2640-3498.

\bibitem[Liu et~al.(2015)Liu, Luo, Wang, and Tang]{liu2015faceattributes}
Ziwei Liu, Ping Luo, Xiaogang Wang, and Xiaoou Tang.
\newblock Deep learning face attributes in the wild.
\newblock In \emph{Proceedings of International Conference on Computer Vision (ICCV)}, December 2015.

\bibitem[Krizhevsky and Hinton(2009)]{krizhevsky2009learning}
Alex Krizhevsky and Geoffrey Hinton.
\newblock Learning multiple layers of features from tiny images.
\newblock Technical Report~0, University of Toronto, Toronto, Ontario, 2009.
\newblock URL \url{https://www.cs.toronto.edu/~kriz/learning-features-2009-TR.pdf}.

\bibitem[Hui(2020)]{hui2020evaluation}
L~Hui.
\newblock Evaluation of neural architectures trained with square loss vs cross-entropy in classification tasks.
\newblock In \emph{The Ninth International Conference on Learning Representations (ICLR 2021)}, 2020.

\bibitem[Achille et~al.(2021)Achille, Golatkar, Ravichandran, Polito, and Soatto]{achille2021lqf}
Alessandro Achille, Aditya Golatkar, Avinash Ravichandran, Marzia Polito, and Stefano Soatto.
\newblock { LQF: Linear Quadratic Fine-Tuning }.
\newblock In \emph{2021 IEEE/CVF Conference on Computer Vision and Pattern Recognition (CVPR)}, pages 15724--15734, Los Alamitos, CA, USA, June 2021. IEEE Computer Society.
\newblock \doi{10.1109/CVPR46437.2021.01547}.

\bibitem[Taheri et~al.(2020)Taheri, Pedarsani, and Thrampoulidis]{taheri_sharp_2020}
Hossein Taheri, Ramtin Pedarsani, and Christos Thrampoulidis.
\newblock Sharp {Asymptotics} and {Optimal} {Performance} for {Inference} in {Binary} {Models}, February 2020.
\newblock URL \url{http://arxiv.org/abs/2002.07284}.
\newblock arXiv:2002.07284 [math].

\end{thebibliography}
}
\newpage

\appendix

\section{Proofs}
\subsection{Proof of Risk in \cref{eq:risk}}
\label{app:risk_proof}
\begin{proof}
Note we can decompose $\matr{X}$ as \begin{equation}
    \vectr{X} = y\vectr{\mu} + \vectr{Z}, \quad \vectr{Z}\sim\mathcal{N}(0,\mathbf{I}_d)
\end{equation}
or in matrix notation as \begin{equation}
    \matr{X} = \vectr{y}\vectr{\mu}^\top + \matr{Z}, \quad \vectr{Z}_i\sim\mathcal{N}(0,\mathbf{I}_d), i\in{[n]}
\end{equation}
Then
\begin{align}
    \mathcal{R}_+&=\Pr(Y (\matr{X}^\top \vectr{\theta} + b)<0 | Y =1)\\  
    &= \Pr(\vectr{\mu}^\top \vectr{\theta} + \vectr{Z}^\top \vectr{\theta} + b<0 ) \\
    &= \Pr( \vectr{Z}^\top \vectr{\theta} > \vectr{\mu}^\top \vectr{\theta} + b) \\
    &= \Pr( \|\vectr{\theta}\| Z' > \vectr{\mu}^\top \vectr{\theta} +b) \\
    &=\Pr( Z'> \frac{\gamma s }{\alpha}+ \frac{b}{\alpha}) \\
    &= Q(\frac{\gamma s }{\alpha}+ \frac{b}{\alpha})
\end{align}
where $\vectr{Z}\sim \mathcal{N}(\vectr{0}, \matr{I})$ and $Z' \sim \mathcal{N}(0,1)$.
\end{proof}
\subsection{Technical Tool: CGMT}
\label{app:cgmt}
A key technical tool in this work is the convex Gaussian minimax theorem, and its predecessor Gordon's comparison lemma. We restate the key result here, but refer the reader to \citet{thrampoulidis2018precise} for a more detailed view on the problem. 

Consider a pair of primary and auxiliary optimization problems:
\begin{align}
    \Phi(\matr{G}) &\triangleq \min_{ \vectr{w}\in\mathcal{S}_w }
    \max_{\vectr{u}\in\mathcal{S}_u} \vectr{u}^\top \matr{G} \vectr{w} + \psi(\vectr{w},\vectr{u}),\label{eq:po}\\
    \phi(\vectr{g},\vectr{h}) &\triangleq \max_{\vectr{w} \in \mathcal{S}_w}
    \min_{\vectr{u}\in\mathcal{S}_u} \| \vectr{w}\|_2 \vectr{g}^\top \vectr{u} + \| \vectr{u}\|_2 \vectr{h}^\top \vectr{w} + \psi(\vectr{w},\vectr{u}).
    \label{eq:ao}
\end{align}
We will show that for certain random inputs, these two problems are equivalent.
\begin{theorem}[CGMT \citep{thrampoulidis_regularized_2015}]
\label{thm:cgmt}
Let $\Phi,\phi$ be defined as in \cref{eq:po,eq:ao}, $\matr{G}$ is a matrix with standard Gaussian entries, and $\vectr{G},\vectr{H}\sim\mathcal{N}(0,\mathbf{I})$.
\begin{equation}
    \Pr\left(\Phi(\matr{G}) < c\right) \le 2\Pr\left( \phi(\vectr{G}, \vectr{H})\le c\right)
\end{equation}
additionally if $\psi$ is convex-concave, then $\forall c\in\mathbb{R}$
\begin{equation}
    \Pr\left(\Phi(\matr{G}) > c\right) \le 2\Pr\left( \phi(\vectr{G}, \vectr{H})\ge c\right).
\end{equation}
That is, we can bound the performance of $\Phi$ by the comparatively simpler $\phi$.
\end{theorem}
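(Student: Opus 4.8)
The plan is to derive both comparison inequalities from Gordon's Gaussian min--max comparison lemma, the Slepian--Fernique-type result underlying the CGMT lineage \citep{thrampoulidis_regularized_2015}. The starting observation is that, after dropping the deterministic payoff $\psi(\vectr{w},\vectr{u})$ (which enters both $\Phi$ and $\phi$ as a common additive shift and can be absorbed into the comparison thresholds), the two optimizations are governed by the zero-mean Gaussian processes $X_{\vectr{w},\vectr{u}} \triangleq \vectr{u}^\top \matr{G}\vectr{w}$ for the primary problem and $Y_{\vectr{w},\vectr{u}} \triangleq \|\vectr{w}\|_2\,\vectr{g}^\top\vectr{u} + \|\vectr{u}\|_2\,\vectr{h}^\top\vectr{w}$ for the auxiliary one. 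First I would compute their covariance kernels, obtaining $\E[X_{\vectr{w},\vectr{u}}X_{\vectr{w}',\vectr{u}'}] = (\vectr{u}^\top\vectr{u}')(\vectr{w}^\top\vectr{w}')$ and, since $\vectr{g}\perp\vectr{h}$, $\E[Y_{\vectr{w},\vectr{u}}Y_{\vectr{w}',\vectr{u}'}] = \|\vectr{w}\|_2\|\vectr{w}'\|_2(\vectr{u}^\top\vectr{u}') + \|\vectr{u}\|_2\|\vectr{u}'\|_2(\vectr{w}^\top\vectr{w}')$.

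Second, because the diagonal variances differ ($\|\vectr{u}\|_2^2\|\vectr{w}\|_2^2$ versus twice that), I would equalize them by augmenting the primary process with an independent scalar term $\|\vectr{u}\|_2\|\vectr{w}\|_2\,h_0$, where $h_0\sim\Nc(0,1)$. With this augmentation the two increment-variance conditions required by Gordon's lemma must be verified: for a fixed $\vectr{w}$ the increments of the two processes match exactly, while for $\vectr{w}\neq\vectr{w}'$ the difference of increment variances collapses, via Cauchy--Schwarz, to the product $(\vectr{u}^\top\vectr{u}' - \|\vectr{u}\|_2\|\vectr{u}'\|_2)(\|\vectr{w}\|_2\|\vectr{w}'\|_2 - \vectr{w}^\top\vectr{w}')$ of two factors of opposite sign, hence of the correct definite sign. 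This sign check is the computational heart of the argument. Gordon's lemma then yields a one-sided stochastic comparison of the two min--max values, in the direction making the primary value stochastically larger, and peeling off the auxiliary $h_0$ on the event $\{h_0\ge 0\}$ (on which the augmented process dominates the true one pointwise, so its min--max is larger) costs exactly a factor of $2$. This delivers the first inequality $\Pr(\Phi(\matr{G}) < c)\le 2\Pr(\phi(\vectr{g},\vectr{h})\le c)$, which uses no convexity.

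Third, for the reverse-tail bound I would exploit the convex--concave hypothesis on $\psi$ together with compactness and convexity of $\mathcal{S}_w,\mathcal{S}_u$. Sion's minimax theorem then guarantees a saddle point, so the $\min$--$\max$ defining $\Phi$ may be interchanged; writing $-\Phi = \min_{\vectr{u}}\max_{\vectr{w}}(-\vectr{u}^\top\matr{G}\vectr{w}-\psi)$ recasts the upper tail of $\Phi$ as the lower tail of a reflected problem of exactly the same type, since $-\vectr{u}^\top\matr{G}\vectr{w}$ is equal in law to $\vectr{u}^\top\matr{G}\vectr{w}$ with the roles of $\vectr{w},\vectr{u}$ exchanged. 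Applying the first inequality to this reflected problem, and using that the auxiliary objective likewise admits a saddle point so that its $\min$--$\max$ and $\max$--$\min$ coincide, yields $\Pr(\Phi(\matr{G}) > c)\le 2\Pr(\phi(\vectr{g},\vectr{h})\ge c)$.

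I expect two main obstacles. The delicate technical point in the probabilistic (as opposed to in-expectation Sudakov--Fernique) comparison is passing from Gordon's inequality on finite index sets to the continuous compact sets $\mathcal{S}_w,\mathcal{S}_u$, which requires a covering/net discretization together with the continuity and compactness assumptions to control approximation errors and ensure the extrema are attained. The conceptually essential step is the convexity argument for the second inequality: Gordon's lemma only controls one tail, and reversing the direction is legitimate only because convexity--concavity produces the saddle point that licenses the $\min$--$\max$ interchange; without it the reflection trick, and hence the second bound, fails, which is precisely why the hypothesis is invoked there alone.
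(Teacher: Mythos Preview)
The paper does not prove \cref{thm:cgmt}; it is quoted as a tool from \cite{thrampoulidis_regularized_2015} (with a pointer to \cite{thrampoulidis2018precise} for details) and used as a black box in \cref{app:werm_sys}. Your outline follows the standard Gordon-based argument from those references, and the covariance computation and sign check underlying the first inequality are correct.

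There is, however, a genuine gap in your derivation of the second inequality. You claim that ``the auxiliary objective likewise admits a saddle point so that its $\min$--$\max$ and $\max$--$\min$ coincide.'' This is false: the auxiliary objective $\|\vectr{w}\|_2\,\vectr{g}^\top\vectr{u} + \|\vectr{u}\|_2\,\vectr{h}^\top\vectr{w} + \psi(\vectr{w},\vectr{u})$ is \emph{not} convex--concave, because $\|\vectr{w}\|_2(\vectr{g}^\top\vectr{u})$ is convex in $\vectr{w}$ only where $\vectr{g}^\top\vectr{u}\ge 0$ and concave where $\vectr{g}^\top\vectr{u}<0$, so Sion's theorem does not apply to the AO and it need not have a saddle point. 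The repair is that strong duality on the AO is unnecessary. After applying Sion to the PO (legitimate, since the PO objective is affine plus convex--concave) and then the first inequality to the reflected problem, the bound you obtain is on $\max_{\vectr{u}}\min_{\vectr{w}}$ of the AO objective; weak duality alone gives $\max_{\vectr{u}}\min_{\vectr{w}} \le \min_{\vectr{w}}\max_{\vectr{u}}$, hence $\{\max_{\vectr{u}}\min_{\vectr{w}}\ge c\}\subseteq\{\phi\ge c\}$, and the upper-tail inequality follows. The distinction matters conceptually: the AO is useful precisely because it can be scalarized and analyzed despite being nonconvex, and asserting a saddle point for it obscures this.

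A smaller slip: in the factor-of-two step you condition on $\{h_0\ge 0\}$, where the augmented min--max $\tilde\Phi$ dominates $\Phi$; but then $\{\tilde\Phi<c\}\subseteq\{\Phi<c\}$ on that event, which is the wrong inclusion for bounding $\Pr(\Phi<c)$ from above. You should condition on $\{h_0\le 0\}$, where $\tilde\Phi\le\Phi$, so that $\{\Phi<c,\,h_0\le 0\}\subseteq\{\tilde\Phi<c\}$; independence of $h_0$ from $\matr{G}$ then gives the factor of two.
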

We will leverage this result to study the comparatively simple auxiliary optimization problem through a series of scalarizations. 
\subsection{Properties of the Moreau Envelope}
\label{app:moreau_partial}
\begin{lemma}[Properties of Moreau] Let $\ell:\R\rightarrow\R$ be proper, closed, convex function. The Moreau envelope $\ml{x}{\la}$ is jointly convex in its arguments and the following hold:
    \begin{align}
        \mlda{x}{\la}&:=\frac{x-\prox{x}{\la}}{\la}\,,
\\
\mldb{x}{\la}&:=-\frac{1}{2}\left(\mlda{x}{\la}\right)^2\,.
    \end{align}
    For a complete list of useful properties see for example Lemma D.1 in \cite{thrampoulidis2018precise}.
\end{lemma}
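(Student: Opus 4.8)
The plan is to prove the three assertions—joint convexity of $\ml{x}{\la}$, the identity for the first partial $\mlda{x}{\la}$, and the identity for the second partial $\mldb{x}{\la}$—separately, each exploiting that the infimum defining the Moreau envelope is attained at the unique point $\prox{x}{\la}$.

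For joint convexity I would write $\ml{x}{\la} = \inf_{v} F(v,x,\la)$ with $F(v,x,\la) \triangleq \ell(v) + \tfrac{1}{2\la}\|v-x\|_2^2$ on $\R\times\R\times\R_{>0}$, and observe that $F$ is jointly convex: the term $\ell(v)$ is convex and independent of $x,\la$, while $(v,x,\la)\mapsto \tfrac{1}{2\la}\|v-x\|_2^2$ is the perspective of the convex function $u\mapsto\tfrac12\|u\|_2^2$ precomposed with the affine map $(v,x)\mapsto v-x$, hence jointly convex for $\la>0$. Since $\ell$ is proper, closed, and convex and the quadratic is coercive in $v$, the infimum is finite and attained (no $-\infty$ pathologies arise), and partial minimization of a jointly convex function over the block $v$ yields a convex function of the remaining arguments; this gives joint convexity of $\ml{\cdot}{\cdot}$.

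For the two derivative identities, the structural fact I would rely on is that for fixed $(x,\la)$ the objective $F(\cdot,x,\la)$ is $1/\la$-strongly convex, so its minimizer $v^\star=\prox{x}{\la}$ is unique; moreover $v^\star$ depends continuously on $(x,\la)$ (in $x$ it is even firmly nonexpansive). Uniqueness of the argmin is exactly the hypothesis needed to apply the envelope (Danskin) theorem, which yields differentiability of $\ml{x}{\la}$ in $(x,\la)$ with gradient obtained by differentiating $F(v^\star,x,\la)$ while freezing $v^\star$. Differentiating in $x$ then gives $\mlda{x}{\la} = -\tfrac{1}{\la}(v^\star-x) = \frac{x-\prox{x}{\la}}{\la}$, and differentiating in $\la$ gives $\mldb{x}{\la} = -\tfrac{1}{2\la^2}\|v^\star-x\|_2^2 = -\tfrac12\left(\frac{x-\prox{x}{\la}}{\la}\right)^2 = -\tfrac12\left(\mlda{x}{\la}\right)^2$, as claimed. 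As a Danskin-free alternative, I could instead start from the optimality condition $0\in\partial\ell(v^\star)+\tfrac1\la(v^\star-x)$ and bound a difference quotient directly, using the $1/\la$-strong convexity to control the cross terms—the classical route to these formulas.

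The one place that needs care is upgrading one-sided directional derivatives to genuine differentiability, i.e.\ verifying the hypotheses of the envelope theorem; I expect this to be the main (though routine) obstacle, and it reduces to continuity of $v^\star$ in $(x,\la)$—firm nonexpansiveness in $x$ together with a standard stability estimate for strongly convex parametric minimization in $\la$. The remaining properties the paper invokes, including the second-order partials used in the main results, can be imported verbatim from Lemma D.1 of \cite{thrampoulidis2018precise}.
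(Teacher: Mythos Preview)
Your argument is correct, but there is nothing in the paper to compare it against: the paper states this lemma without proof and simply points the reader to Lemma~D.1 of \cite{thrampoulidis2018precise} for the details. So your proposal goes strictly further than the paper does here---you supply the perspective-function argument for joint convexity and the envelope/Danskin route for the two partial-derivative identities, whereas the paper treats these as imported facts. Both of your computations are sound: freezing $v^\star=\prox{x}{\la}$ and differentiating $F(v^\star,x,\la)$ in $x$ and in $\la$ gives exactly the stated formulas, and your remark that $1/\la$-strong convexity yields uniqueness and continuity of $v^\star$ is the right justification for invoking the envelope theorem.
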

\subsection{Proof of \cref{thm:werm_sys}}
\label{app:werm_sys}
Note that we can rewrite \cref{eq:werm_basic} by the constrained optimization problem \begin{equation}
    \begin{aligned}
        \min_{\vectr{\theta},b} \quad & \frac{1}{n}\sum_{i=1}^n\omega_i\ell(u_i) \\ 
        \textrm{s.t.}\quad & u_i = y_i(\vectr{x}_i^\top \vectr{\theta} + b) \quad \forall i\in [n]
    \end{aligned}
\end{equation}
which then allows us to write the problem as a min-max:
\begin{equation}
    \min_{\vectr{\theta},b,\vectr{u}}\max_{\vectr{v}} \frac{1}{n}\sum_{i=1}^nv_i (u_i - y_i\vectr{x}_i^\top \vectr{\theta} -y_ib) + \omega_i\ell(u_i).
    \label{eq:wce_po}
\end{equation}
From this point, we can move to matrix notation
\begin{equation}
    \min_{\vectr{\theta},b,\vectr{u}}\max_{\vectr{v}} \frac{1}{n} \vectr{v}^\top (\vectr{u}-\matr{D}_{\vectr{y}}\matr{x}\vectr{\theta} - \vectr{y}b) + \mathcal{L}_\omega(\vectr{u}),
\end{equation}
where $\mathcal{L}_\omega(\vectr{u}) \triangleq \frac{1}{n}\sum_{i=1}^n\omega_i \ell(u_i)$ and $\matr{D}_{\vectr{y}}$ is the diagonal matrix of the vector $\vectr{y}$. We can further
decompose this by expanding $\matr{X}$ as \begin{equation}
    \min_{\vectr{\theta},b,\vectr{u}}\max_{\vectr{v}} \frac{1}{n} [\vectr{v}^\top \vectr{u}-\vectr{v}^\top \matr{D}_{\vectr{y}}(\vectr{y}\vectr{\mu}^\top -\matr{z})\vectr{\theta} - \vectr{v}^\top \vectr{y} b] + \mathcal{L}_\omega(\vectr{u}).
\end{equation}
Finally, noting that $\matr{D}_{\vectr{y}}\vectr{y} = \vectr{1}$, we have the primary optimization:\begin{equation}
    \tag{PO}
    \min_{\vectr{\theta},b,\vectr{u}}\max_{\vectr{v}} \frac{1}{n}[\vectr{v}^\top \matr{z}\vectr{\theta}-\vectr{v}^\top \vectr{1}(\vectr{\mu}^\top \vectr{\theta} ) -\vectr{v}^\top \vectr{y} b + \vectr{v}^\top \vectr{u}] + \mathcal{L}_\omega(\vectr{u}).
\end{equation}
Next we write our auxiliary optimization following CGMT \cref{thm:cgmt}
\begin{equation}
    \tag{AO}
    \min_{\vectr{\theta},b,\vectr{u}}\max_{\vectr{v}} \frac{1}{n}[\Vert \vectr{\theta} \Vert \vectr{g}^\top \vectr{v} + \Vert \vectr{v} \Vert \vectr{h}^\top \vectr{\theta} + \vectr{v}^\top \vectr{u} - \vectr{v}^\top \vectr{1}(\vectr{\mu}^\top \vectr{\theta} ) - \vectr{v}^\top \vectr{y} b] + \mathcal{L}_\omega(\vectr{u}).
\end{equation}
We can separately optimize over the direction and magnitude of $\vectr{v}$: \begin{align}
    \min_{\vectr{\theta},b,\vectr{u}}\max_{\beta\ge0, \Vert\vectr{v}\Vert = \beta}& \frac{1}{n}[\Vert \vectr{\theta} \Vert \vectr{g}^\top \vectr{v} + \beta \vectr{h}^\top \vectr{\theta} + \vectr{v}^\top \vectr{u} - \vectr{v}^\top \vectr{1}(\vectr{\mu}^\top \vectr{\theta} ) - \vectr{v}^\top \vectr{y}b] + \mathcal{L}_\omega(\vectr{u}) \\
    \min_{\vectr{\theta},b,\vectr{u}}\max_{\beta\ge0,\Vert\vectr{v}\Vert = \beta}& \frac{1}{n}[ \vectr{v}^\top (\Vert \vectr{\theta} \Vert \vectr{g}  + \vectr{u} - \vectr{1}(\vectr{\mu}^\top \vectr{\theta}) - \vectr{y}b) + \beta \vectr{h}^\top \vectr{\theta}] + \mathcal{L}_\omega(\vectr{u}) \\
    \min_{\vectr{\theta},b,\vectr{u}}\max_{\beta\ge0}& \frac{1}{n}[ \beta\Vert \vectr{u}+ \Vert \vectr{\theta} \Vert \vectr{g}   - \vectr{1}(\vectr{\mu}^\top \vectr{\theta} ) - \vectr{y}b\Vert + \beta \vectr{h}^\top \vectr{\theta}] + \mathcal{L}_\omega(\vectr{u}) .
\end{align}
Next we use the AM-GM trick\footnote{$\frac{x}{\sqrt{n}} = \min_{\tau>0} \frac{\tau}{2} + \frac{x^2}{2\tau}$} to rewrite the two norm as the squared two norm, letting $\xi = \frac{\beta}{\sqrt{n}}$:
\begin{equation}
\min_{\vectr{\theta},b,\vectr{u}}\max_{\xi\ge0}\min_{\tau>0} \frac{\xi\tau}{2} + \frac{\xi}{2\tau n} {\Vert \vectr{u}+ \Vert \vectr{\theta} \Vert \vectr{g}   - \vectr{1}(\vectr{\mu}^\top \vectr{\theta} ) - \vectr{y}b\Vert^2} + \frac{\xi}{\sqrt{n}}  \vectr{h}^\top \vectr{\theta} + \mathcal{L}_\omega(\vectr{u})
\end{equation}
Now we optimize $\vectr{\theta}$ in the orthogonal subspace to $\vectr{\mu}$ setting $\Vert \vectr{\theta} \Vert = \alpha$ and $\frac{\vectr{\theta}^\top \vectr{\mu}}{\Vert \vectr{\mu} \Vert} = \gamma$ \footnote{$\vectr{\theta}=\frac{(\vectr{\mu}^\top\vectr{\theta})}{\|\vectr{\mu}\|}\frac{\vectr{\mu}}{\|\vectr{\mu}\|} + P^\perp\vectr{\theta}\Rightarrow \|\vectr{\theta}\|^2 = (\frac{(\vectr{\mu}^\top\vectr{\theta})}{\|\vectr{\mu}\|})^2 + \|P^\perp\vectr{\theta}\|^2 \Rightarrow \|P^\perp\vectr{\theta}\| = \sqrt{\alpha^2-\gamma^2}$}:
\begin{align}
\min_{\alpha,b, \gamma \le \alpha,u}\max_{\xi\ge0}\min_{\tau>0} &\frac{\xi\tau}{2} + \frac{\xi}{2\tau n} {\Vert \vectr{u}+ \alpha \vectr{g}   - \vectr{1}(\Vert \vectr{\mu} \Vert\gamma ) -\vectr{y}b\Vert^2} \nonumber\\&\qquad-  \frac{\xi}{\sqrt{n}} (\vectr{h}^\top \vectr{\mu} \frac{\gamma}{\Vert \vectr{\mu} \Vert} + \sqrt{\alpha^2 -\gamma^2}\Vert \matr{P}^\perp \vectr{h}\Vert) + \mathcal{L}_\omega(\vectr{u})
\end{align}
where $\matr{P}^\perp$ projection into the orthogonal subspace of $\vectr{\mu}$. This can then be written in terms of the Moreau envelope:
\begin{align}
\min_{\alpha,b, \gamma \le \alpha}\max_{\xi\ge0}\min_{\tau>0} &\frac{\xi\tau}{2} - \frac{\xi}{\sqrt{n}} (\vectr{h}^\top \vectr{\mu} \frac{\gamma}{\Vert \vectr{\mu} \Vert} + \sqrt{\alpha^2-\gamma^2}\Vert \matr{P}^\perp \vectr{h}\Vert) \nonumber\\&\quad
 + \frac{1}{n}[\sum_{i=1}^{n_+}\omega_+\mathcal{M}(-\alpha g_i + \Vert \vectr{\mu} \Vert \gamma + b; \frac{\tau\omega_+}{\xi})  \nonumber\\&\quad +\sum_{j=1}^{n_-} \omega_-\mathcal{M}(-\alpha g_j + \Vert \vectr{\mu} \Vert \gamma - b; \frac{\tau\omega_-}{\xi})].
\end{align}
where we have grouped by $\vectr{y}$.
Letting $n,d\rightarrow\infty$ with $\frac{d}{n} = \delta$, we have \begin{align}
\min_{\alpha>0,b, \gamma \le \alpha,\tau>0}\max_{\xi\ge0} &\frac{\xi\tau}{2} - \xi \sqrt{\delta(\alpha^2 - \gamma^2)} \nonumber
+ \pi_+ \omega_+\E[\mathcal{M}(-\alpha G + \Vert\vectr{\mu}\Vert\gamma + b; \frac{\tau\omega_+}{\xi})] \\&\quad+ \pi_- \omega_-\E[\mathcal{M}(-\alpha G + \Vert \vectr{\mu} \Vert \gamma - b; \frac{\tau\omega_-}{\xi})].
\end{align}
where we have switched the order of the min and max. See \citet{taheri_sharp_2020} for a justification and further technical details.

This leaves us with a simplified systems of equations to solve based on first-order optimality conditions:
\begin{subequations}
\begin{align}
    \frac{\xi}{2} + \pi_+\frac{\omega_+^2}{\xi}\E[\mathcal{M}'_{\ell,2}(-\alpha G + \Vert \vectr{\mu} \Vert \gamma +b;\frac{\tau\omega_+}{\xi})] \nonumber\\+ \pi_-\frac{\omega_-^2}{\xi}\E[\mathcal{M}'_{\ell,2}(-\alpha G + \Vert \vectr{\mu} \Vert\gamma-b;\frac{\tau\omega_-}{\xi})] &= 0 \label{eq:tau}\\ 
    \frac{\tau}{2} - \sqrt{\delta(\alpha^2 -\gamma^2)} - \pi_+\frac{\tau\omega_+^2}{\xi^2}\E[\mathcal{M}'_{\ell,2}(-\alpha G + \Vert\vectr{\mu}\Vert\gamma + b;\frac{\tau\omega_+}{\xi})] \nonumber\\ - \pi_-\frac{\tau\omega_-^2}{\xi^2}\E[\mathcal{M}'_{\ell,2}(-\alpha G + \Vert\vectr{\mu}\Vert\gamma-b;\frac{\tau\omega_-}{\xi})] &= 0 \label{eq:xi}\\ 
    \xi\frac{\delta\gamma}{\sqrt{\delta(\alpha^2 -\gamma^2)}} + \pi_+\omega_+ \Vert\vectr{\mu}\Vert\E[\mathcal{M}'_{\ell,1}(-\alpha G + \Vert\vectr{\mu}\Vert\gamma+b;\frac{\tau\omega_+}{\xi})] \nonumber\\ + \pi_-\omega_-\Vert\vectr{\mu}\Vert\E[\mathcal{M}'_{\ell,1}(-\alpha G + \Vert\vectr{\mu}\Vert\gamma-b;\frac{\tau\omega_-}{\xi})] &= 0\label{eq:gamma}\\ 
    \pi_+\omega_+ \E[\mathcal{M}'_{\ell,1}(-\alpha G + \Vert\vectr{\mu}\Vert\gamma+b;\frac{\tau\omega_+}{\xi})] \nonumber\\ - \pi_-\omega_-\E[\mathcal{M}'_{\ell,1}(-\alpha G + \Vert\vectr{\mu}\Vert\gamma-b;\frac{\tau\omega_-}{\xi})] &= 0\label{eq:b}\\ 
    -\xi \frac{\delta\alpha}{\sqrt{\delta(\alpha^2 -\gamma^2)}} - \pi_+\omega_+ \E[G\mathcal{M}'_{\ell,1}(-\alpha G + \Vert\vectr{\mu}\Vert\gamma+b;\frac{\tau\omega_+}{\xi})] \nonumber\\ - \pi_-\omega_-\E[G\mathcal{M}'_{\ell,1}(-\alpha G + \Vert\vectr{\mu}\Vert\gamma-b;\frac{\tau\omega_-}{\xi})] &= 0 \label{eq:alpha}
\end{align}
\end{subequations}

Let $\lambda' \triangleq {\tau}/{\xi}$. Combining \cref{eq:tau} and \cref{eq:xi} we obtain $\tau = \sqrt{\delta(\alpha^2 -\gamma^2)}$, equivalently, we have $\xi = {\sqrt{\delta(\alpha^2 - \gamma^2)}}/{\lambda'}$. Substituting this into \cref{eq:alpha,eq:gamma} we see the following simplifications:
\begin{subequations}\begin{align}
     \delta(\alpha^2-\gamma^2) + 2\lambda'^2\pi_+\omega_+^2\E[\mathcal{M}'_{\ell,2}(-\alpha G + \Vert \vectr{\mu} \Vert\gamma + b;\lambda'\omega_+)] \nonumber \\ 
     + 2\lambda'^2\pi_-\omega_-^2\E[\mathcal{M}'_{\ell,2}(-\alpha G + \Vert \vectr{\mu} \Vert \gamma -b;\lambda'\omega_-)] &= 0\\
    \frac{\delta\gamma}{\lambda'} + 
    \pi_+\omega_+ \Vert \vectr{\mu} \Vert\E[\mathcal{M}'_{\ell,1}(-\alpha G + \Vert \vectr{\mu} \Vert\gamma + b;\lambda'\omega_+)] \nonumber\\
    + \pi_-\omega_-\Vert \vectr{\mu} \Vert \E[\mathcal{M}'_{\ell,1}(-\alpha G + \Vert \vectr{\mu} \Vert\gamma-b;\lambda'\omega_-)] &= 0 \\
    -\frac{\delta\alpha}{\lambda'} - \pi_+\omega_+ \E[G\mathcal{M}'_{\ell,1}(-\alpha G + \Vert \vectr{\mu} \Vert\gamma + b;\lambda'\omega_+)] \nonumber\\
    - \pi_-\omega_-\E[G\mathcal{M}'_{\ell,1}(-\alpha G + \Vert \vectr{\mu} \Vert\gamma - b;\lambda'\omega_-)] &= 0 \\
    \pi_+\omega_+ \E[\mathcal{M}'_{\ell,1}(-\alpha G + \Vert\vectr{\mu}\Vert\gamma+b;\lambda'\omega_+)]  \nonumber\\- \pi_-\omega_-\E[\mathcal{M}'_{\ell,1}(-\alpha G + \Vert\vectr{\mu}\Vert\gamma-b;\lambda'\omega_-)] &= 0
\end{align}
\end{subequations}
Note that the above depends only on the ratio of the weights, so let $\lambda\triangleq\lambda'\omega_+$ and $\rho\triangleq\frac{\omega_+}{\omega_-}$,
\begin{subequations}\begin{align}
     \delta(\alpha^2-\gamma^2) + 2\lambda^2\pi_+\E[\mathcal{M}'_{\ell,2}(-\alpha G + \Vert \vectr{\mu} \Vert\gamma + b;\lambda)] \nonumber \\ 
     + 2\frac{\lambda^2}{\rho^2}\pi_-\E[\mathcal{M}'_{\ell,2}(-\alpha G + \Vert \vectr{\mu} \Vert \gamma -b;\lambda/\rho)] &= 0\\
    \frac{\delta\gamma\rho}{\lambda} + 
    \pi_+ \rho\Vert \vectr{\mu} \Vert\E[\mathcal{M}'_{\ell,1}(-\alpha G + \Vert \vectr{\mu} \Vert\gamma + b;\lambda)] \nonumber\\
    + \pi_-\Vert \vectr{\mu} \Vert \E[\mathcal{M}'_{\ell,1}(-\alpha G + \Vert \vectr{\mu} \Vert\gamma-b;\lambda/\rho)] &= 0 \\
    -\frac{\delta\alpha\rho}{\lambda} - \pi_+\rho\E[G\mathcal{M}'_{\ell,1}(-\alpha G + \Vert \vectr{\mu} \Vert\gamma + b;\lambda)] \nonumber\\
    - \pi_-\E[G\mathcal{M}'_{\ell,1}(-\alpha G + \Vert \vectr{\mu} \Vert\gamma - b;\lambda/\rho)] &= 0 \\
    \pi_+\rho \E[\mathcal{M}'_{\ell,1}(-\alpha G + \Vert\vectr{\mu}\Vert\gamma+b;\lambda)]  \nonumber\\- \pi_-\E[\mathcal{M}'_{\ell,1}(-\alpha G + \Vert\vectr{\mu}\Vert\gamma-b;\lambda/\rho)] &= 0
\end{align}
\end{subequations}
We can rewrite using Stein's lemma\footnote{For smooth function $f$ and $G$ standard normal: $\E[G f(G)]= \E[f'(G)]$ and $\E[G f(aG)]=\alpha\E[f'(\alpha G)]$}: 
\begin{subequations}\label{eq:sys of eq wce}
\begin{align}
     \delta(\alpha^2-\gamma^2) + 2\lambda^2\pi_+\E[\mathcal{M}'_{\ell,2}(-\alpha G + \Vert \vectr{\mu} \Vert\gamma + b;\lambda)] \nonumber \\ 
     + 2\lambda^2/\rho^2\pi_-\E[\mathcal{M}'_{\ell,2}(-\alpha G + \Vert \vectr{\mu} \Vert \gamma -b;\lambda/\rho)] &= 0\\
    \frac{\delta\gamma\rho}{\lambda} + 
    \pi_+ \rho\Vert \vectr{\mu} \Vert\E[\mathcal{M}'_{\ell,1}(-\alpha G + \Vert \vectr{\mu} \Vert\gamma + b;\lambda)] \nonumber\\    + \pi_-\Vert \vectr{\mu} \Vert \E[\mathcal{M}'_{\ell,1}(-\alpha G + \Vert \vectr{\mu} \Vert\gamma-b;\lambda/\rho)] &= 0 \\
    -\frac{\delta\alpha\rho}{\lambda} + \pi_+\rho\alpha\E[\mathcal{M}''_{\ell,1}(-\alpha G + \Vert \vectr{\mu} \Vert\gamma + b;\lambda)] \nonumber\\
    + \pi_-\alpha\E[\mathcal{M}''_{\ell,1}(-\alpha G + \Vert \vectr{\mu} \Vert\gamma - b;\lambda/\rho)] &= 0 \\
    \pi_+\rho \E[\mathcal{M}'_{\ell,1}(-\alpha G + \Vert\vectr{\mu}\Vert\gamma+b;\lambda)]  \nonumber\\- \pi_-\E[\mathcal{M}'_{\ell,1}(-\alpha G + \Vert\vectr{\mu}\Vert\gamma-b;\lambda/\rho)] &= 0
\end{align}
\end{subequations}
This completes the proof.

\subsection{Proof of \cref{thm:sq_sys}}
\label{app:werm_sq}
The margin-based form of square loss can be written as \begin{equation}
    \ell_\text{square}(z) \triangleq \frac{1}{2}(z-1)^2
\end{equation}
where $z$ is the margin. The $\operatorname{prox}$ operator (and therefore the Moreau envelope) has a nice form for this loss:\begin{equation}
    \operatorname{prox}_{\ell_\text{square}}(x;\lambda) = \frac{x+\lambda}{1+\lambda}
\end{equation}
which thus reduces the Moreau envelope to 
\begin{subequations}
\begin{align}
    \mathcal{M}_{\ell_\text{square}}(x;\lambda) &= \frac{1}{2}(\frac{x+\lambda}{1+\lambda} -1)^2 + \frac{1}{2\lambda} (\frac{\lambda(1-x)}{1+\lambda})^2 \\
    &= \frac{1}{2}\frac{(x-1)^2}{(1+\lambda)^2} + \frac{\lambda(1-x)^2}{2(1+\lambda)^2} \\
    &= \frac{1}{2}\frac{(x-1)^2}{1+\lambda}
\end{align}
\end{subequations}

We have the following derivatives of the Moreau envelope:
\begin{align}
\mlda{x}{\la}&:=\frac{x-1}{1+\la}
\\
\mldaa{x}{\la}&:=\frac{1}{1+\la}
\\
\mldb{x}{\la}&:=\frac{-(x-1)^2}{2(1+\la)^2}
\end{align}

Using the properties of the Moreau envelope of square loss, we get the following from \eqref{eq:sys of eq wce}:
\begin{subequations}\begin{align}
     \delta(\alpha^2-\gamma^2) + \lambda^2\pi_+
     \frac{-\mathbb{E}[(-\alpha G + \Vert \vectr{\mu} \Vert\gamma + b - 1)^2]}{(1+\lambda)^2} \nonumber \\ 
     + \lambda^2/\rho^2\pi_-\frac{-\mathbb{E}[(-\alpha G + \Vert \vectr{\mu} \Vert\gamma - b - 1)^2]}{(1+\lambda/\rho)^2}&=0 \\
    \frac{\delta\gamma\rho}{\lambda} + 
    \pi_+ \rho\Vert \vectr{\mu} \Vert\frac{\E[(-\alpha G + \Vert \vectr{\mu} \Vert\gamma + b-1)]}{1+\lambda} \nonumber\\
    + \pi_-\Vert \vectr{\mu} \Vert \frac{\E[(-\alpha G + \Vert \vectr{\mu} \Vert\gamma-b-1)]}{1+\lambda/\rho} &= 0 \\
    -\frac{\delta\alpha\rho}{\lambda} + \pi_+\alpha\rho\frac{1}{1+\lambda} 
    + \pi_-\alpha\frac{1}{1+\lambda/\rho} &= 0 \\
    \pi_+\rho \frac{\E[(-\alpha G + \Vert\vectr{\mu}\Vert\gamma+b-1)]}{1+\lambda}  \nonumber\\- \pi_-\frac{\E[(-\alpha G + \Vert\vectr{\mu}\Vert\gamma-b-1)]}{1+\lambda/\rho} &= 0
\end{align}
\end{subequations}

Taking the expectations, we get
\begin{subequations}\begin{align}
     \delta(\alpha^2-\gamma^2) - \lambda^2\pi_+
     \frac{2b\gamma\Vert \vectr{\mu}\Vert  +\gamma^2\Vert\vectr{\mu}\Vert^2 - 2\gamma \Vert\vectr{\mu}\Vert + b^2 -2b + \alpha^2 + 1}{(1+\lambda)^2} \nonumber \\ 
     - \lambda^2/\rho^2\pi_-\frac{-2b\gamma\Vert \vectr{\mu}\Vert  +\gamma^2\Vert\vectr{\mu}\Vert^2 - 2\gamma \Vert\vectr{\mu}\Vert + b^2 +2b + \alpha^2 + 1}{(1+\lambda/\rho)^2}&=0 \\
    \frac{\delta\gamma\rho}{\lambda} + 
    \pi_+ \rho\Vert \vectr{\mu} \Vert\frac{\Vert \vectr{\mu} \Vert\gamma + b-1}{1+\lambda}
    + \pi_-\Vert \vectr{\mu} \Vert \frac{\Vert \vectr{\mu} \Vert\gamma-b-1}{1+\lambda/\rho} &= 0 \\
    -\frac{\delta\rho}{\lambda} + \pi_+\rho\frac{1}{1+\lambda} 
    + \pi_-\frac{1}{1+\lambda/\rho} &= 0 \\
    \pi_+\rho\frac{ \Vert\vectr{\mu}\Vert\gamma+b-1}{1+\lambda} - \pi_-\frac{\Vert\vectr{\mu}\Vert\gamma-b-1}{1+\lambda/\rho} &= 0
\end{align}
\label{eq:sys of eq square}
\end{subequations}
This proves \cref{thm:sq_sys}.

\subsection{Comparison of $\rho=1$ and $\tilde\rho$}
Note that we have closed forms for both the unweighted case (\cref{thm:sq_unfair}) and the weighted case such that $b^*=0$ (\cref{thm:sq_fair}). Thus, it is natural to compare the two to see when it is advantageous to weight in this manner.

In the unweighted case, the WCE is dominated by the positive (minority) risk in \cref{eq:risk} since $b^*<0$. On the other hand, the two class-conditional risks are equal for the model learned with $\tilde\rho$ by construction. Thus, we need not consider the max in the definition of WCE \cref{eq:wce}. The comparison of interest, then, is simply two $Q$ functions. We will derive  conditions under which $\tilde\rho$ achieves lower WCE than $\rho=1$. We denote the solution to the weighted problem as $(\tilde\gamma,\tilde\alpha)$ and the unweighted problem as $(\gamma^*, \alpha^*,b^*)$. Thus, in order for the $\tilde\rho$-reweighted WCE to be lower than the unweighted WCE,
\begin{align}
    Q\left(\frac{\tilde\gamma s}{\tilde\alpha}\right) &\le Q\left(\frac{\gamma^*s + b^*}{\alpha^*}\right),
\end{align}
requiring
\begin{align}
    \frac{\tilde\gamma s}{\tilde\alpha} &\ge \frac{\gamma^*s + b^*}{\alpha^*}.
\end{align}
Substituting the closed-form solutions yields the following necessary and sufficient condition:
\begin{align}
    s^2 &\le \frac{1-2\pi_+ }{2\left(2\pi_+(1-\pi_+) - \sqrt{\frac{(1-\pi_+)\pi_+(4(1-\pi_+)\pi_+ - \delta)}{1-\delta}}\right)}
\end{align}
This final expression demonstrates that if the separation of the classes is too large, then the unweighted model will outperform the weighted model. This is perhaps intuitive given that large separations essentially act as leverage, possibly causing the weights to overcorrect.  
Note that in this large separation regime, $\frac{\tilde\gamma s}{\tilde\alpha}$ is approximately linear and so the WCE of the weighted model decays exponentially in $s$. This tells us that when $\tilde\rho$ is outperformed by $\rho=1$, the errors are very small.

\begin{figure}[h]
    \centering
    \begin{subfigure}{0.48\linewidth}
    \includegraphics[width=\linewidth]{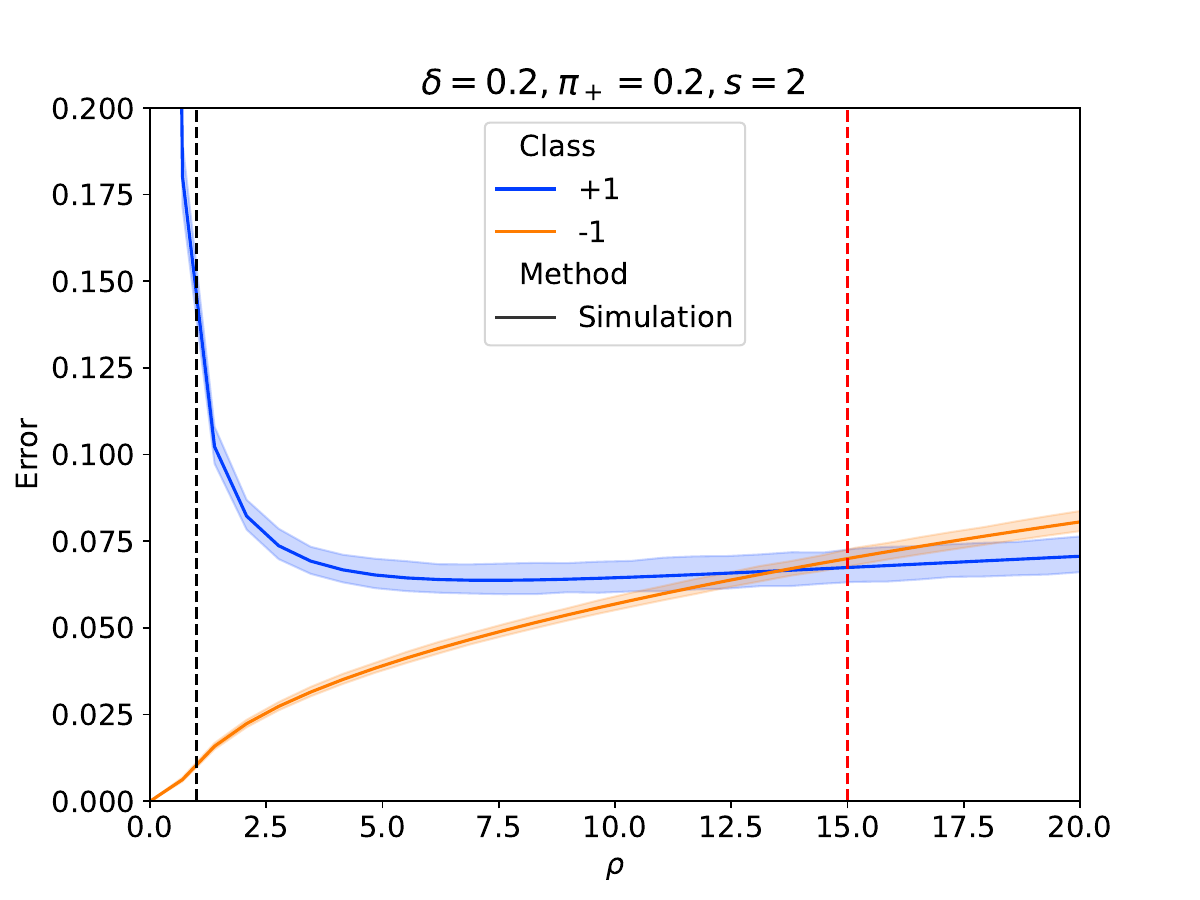}
    \caption{$s=2$, mild separation}
    \label{fig:s2}
    \end{subfigure}\hfill%
    \begin{subfigure}{0.48\linewidth}
    \includegraphics[width=\linewidth]{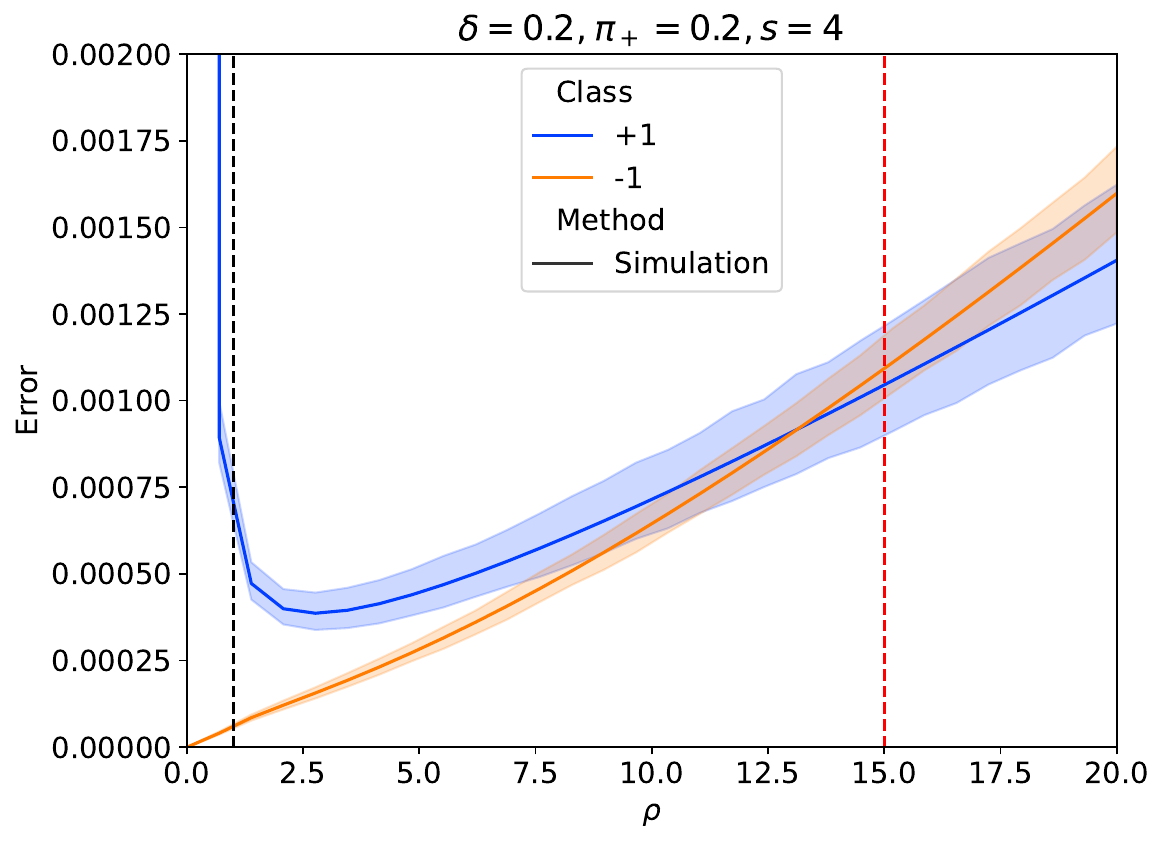}
    \caption{$s=4$, significant separation}
    \label{fig:s4}
    \end{subfigure}
    \caption{With mild separation, the weighted model (red) does significantly better than the unweighted model (black), but this is reversed with large separation between classes. Note the difference in scales of the error.}
    \label{fig:separation}
\end{figure}
See \cref{fig:separation} to understand the behavior of the class-conditional risks with increasing separation. In \cref{fig:s4}, note that $\tilde\rho$ is predictive of the crossover point, but that the  weight where the WCE is actually minimized is smaller than $\tilde\rho$ in the large separation case.
This result provides clear insight into the regions where weighting is useful and emphasizes the intuitive point that larger separations demand less weighting. Indeed we see that increasing separation decreases the errors for each class while also decreasing the optimal weight; we conjecture that for $s\rightarrow\infty$, the optimal weight will decrease from $\tilde\rho$ to 1. This regime is of little impact, however, as the error for both will be extremely small as pointed out previously.
\newpage
\section{Additional Plots}
\label{app:plots}
\subsection{PCA}
We see in \cref{fig:pca} that very few features capture most of the variance of the retraining data. This motivates us to use the smaller ``effective'' dimension as mentioned in the main body. Note that for CIFAR10, the limited data limits the maximum number of meaningful features.
\begin{figure}[h]
\begin{subfigure}{0.48\linewidth}
    \centering
    \includegraphics[width=\linewidth]{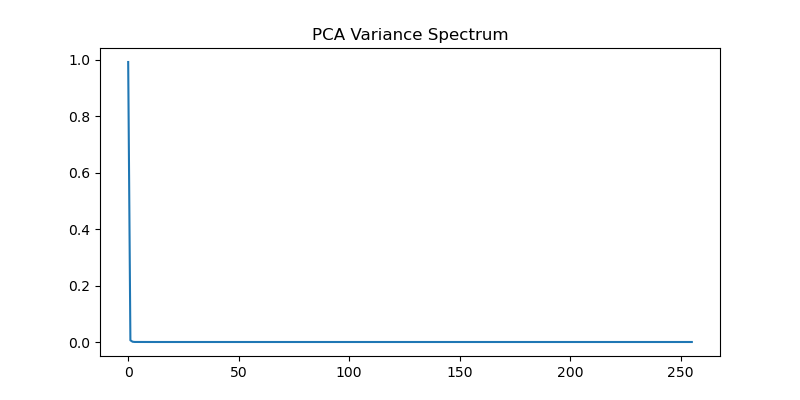}
    \caption{CelebA}
    \label{fig:celeba_pca}
\end{subfigure}\hfill%
\begin{subfigure}{0.48\linewidth}
    \centering
    \includegraphics[width=\linewidth]{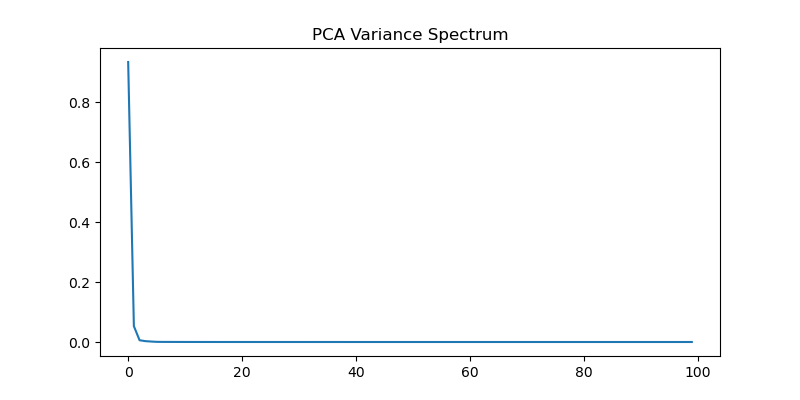}
    \caption{CIFAR10, planes vs trucks}
    \label{fig:cifar_pca}
\end{subfigure} \\
\begin{subfigure}{0.48\linewidth}
    \centering
    \includegraphics[width=\linewidth]{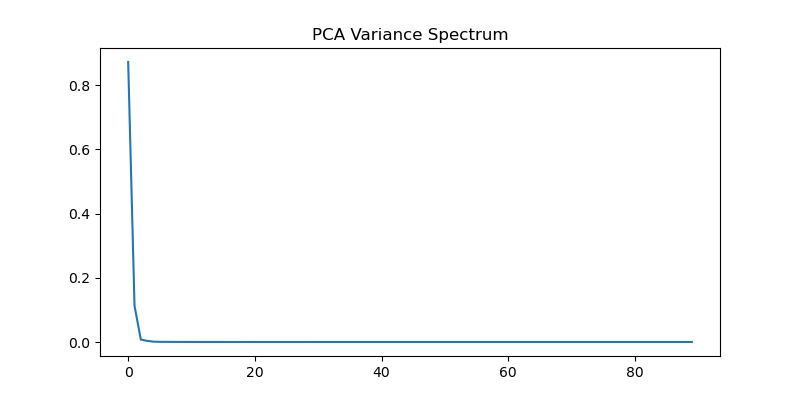}
    \caption{CIFAR10, cats vs dogs}
    \label{fig:catdog_pca}
\end{subfigure}\hfill%
\begin{subfigure}{0.48\linewidth}
    \centering
    \includegraphics[width=\linewidth]{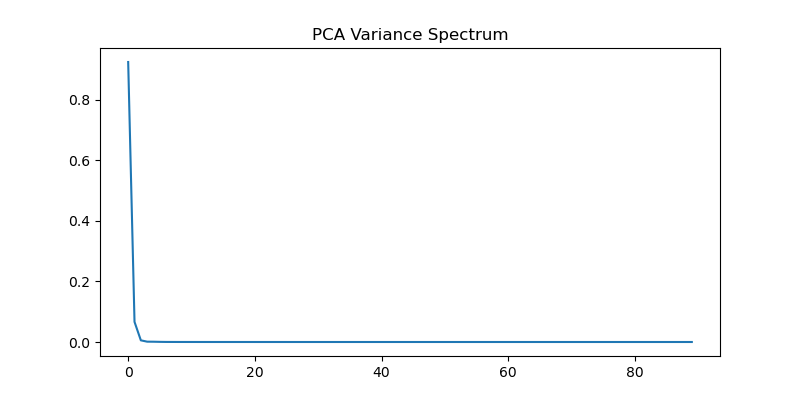}
    \caption{CIFAR10, deer vs horses}
    \label{fig:deerhorse_pca}
\end{subfigure} 
\caption{PCA Spectra. We see that most of the variance is captured by very few features.}
\label{fig:pca}
\end{figure}
\subsection{Ablation Studies}
We provide additional plots in \cref{fig:celebA_128} showing that our findings hold even when the latent dimension of the ResNet34 model is different. Here we show results for a ResNet34 model with 128 dimensional latent space. The effective dimension (3) is still used to calculate $\tilde \rho$.
\begin{figure}[h]
\begin{subfigure}{0.48\linewidth}
    \centering
    \includegraphics[width=\linewidth]{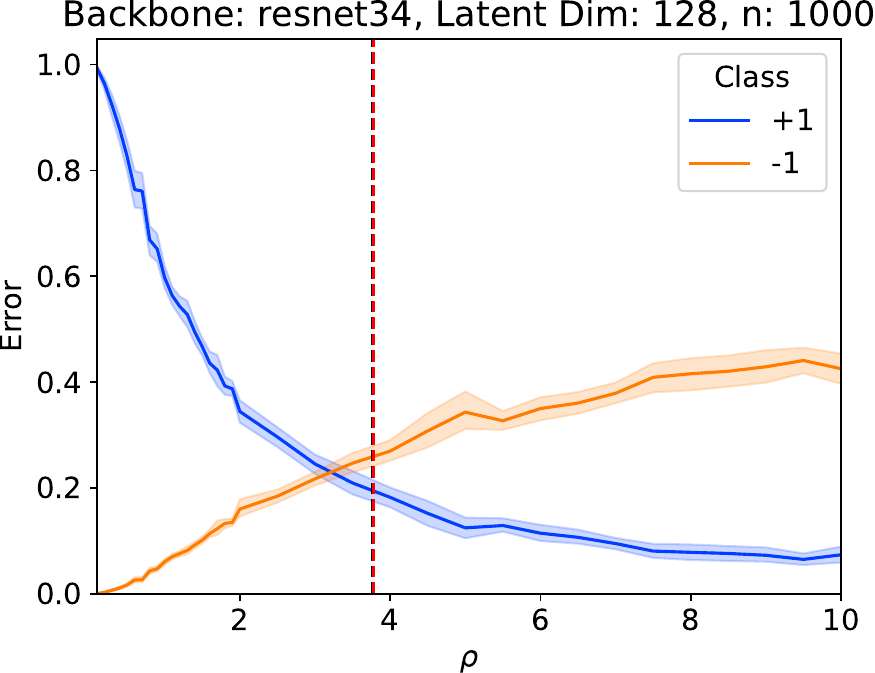}
    \caption{$n=1000$}
\end{subfigure}\hfill%
\begin{subfigure}{0.48\linewidth}
    \centering
    \includegraphics[width=\linewidth]{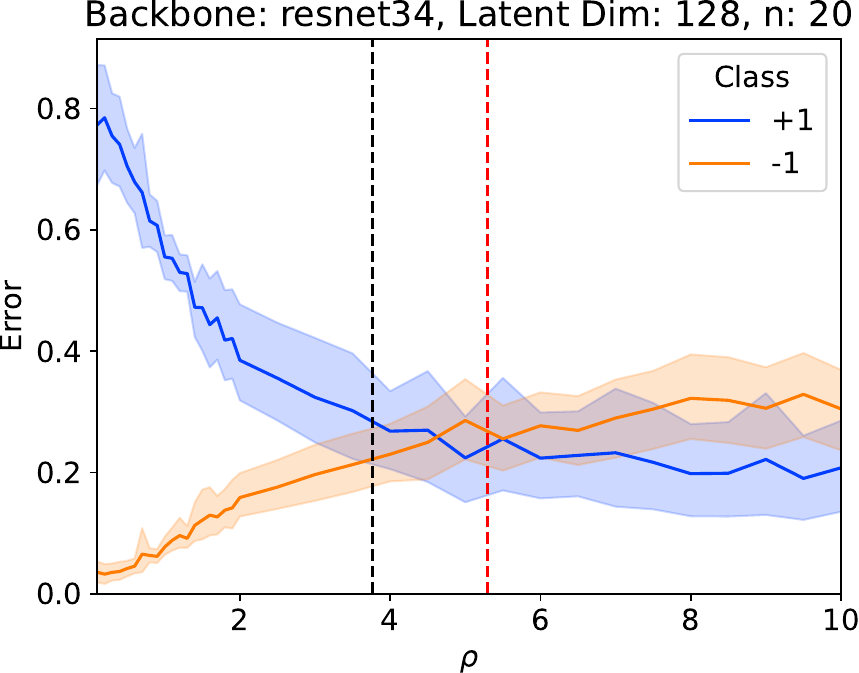}
    \caption{$n=20$}
\end{subfigure}
\caption{Per-class errors on CelebA dataset. Even with a lower latent dimension, $\tilde\rho$ (red) is still predictive of the cross-over point modulo a shift seen even in the large $n$ setting. This is likely due to non-Gaussianity of the latent data.}
\label{fig:celebA_128}
\end{figure}

Even for different class pairs in CIFAR10, we see similar behavior as shown in \cref{fig:cifar_catdog}. For this dataset, we select \texttt{cat} as the minority class $+1$ and \texttt{dog} as the majority class $-1$. The imbalance is selected as 17\% class $+1$ and 83\% class $-1$. For this dataset, the effective dimension is 3. $\tilde\rho$ is still predictive of the crossover point for both $n=90$ and $n=20$ resulting in significant gains over the traditional ratio of priors.
\begin{figure}[h]
\begin{subfigure}{0.48\linewidth}
    \centering
    \includegraphics[width=\linewidth]{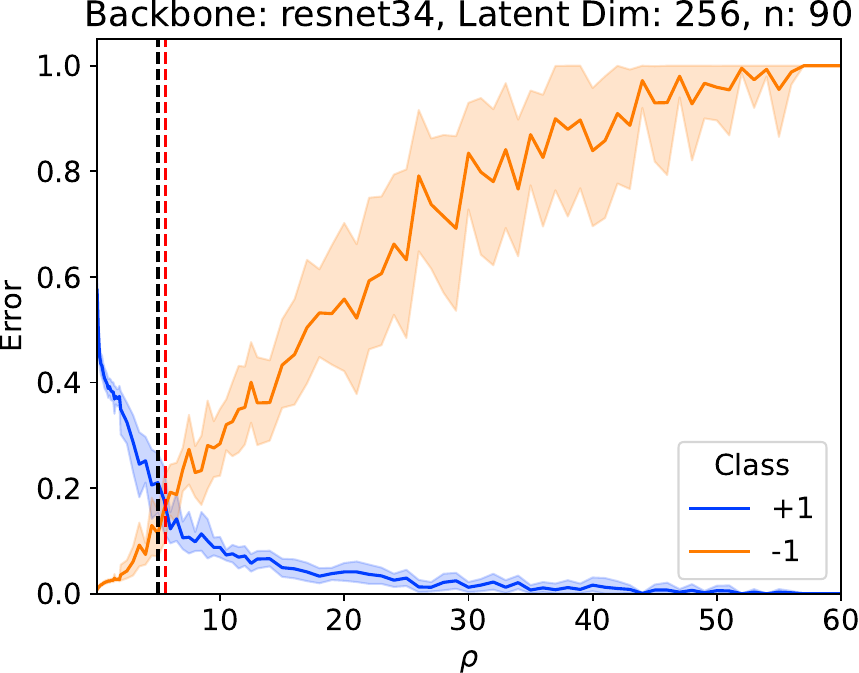}
    \caption{$n=90$}
\end{subfigure}\hfill%
\begin{subfigure}{0.48\linewidth}
    \centering
    \includegraphics[width=\linewidth]{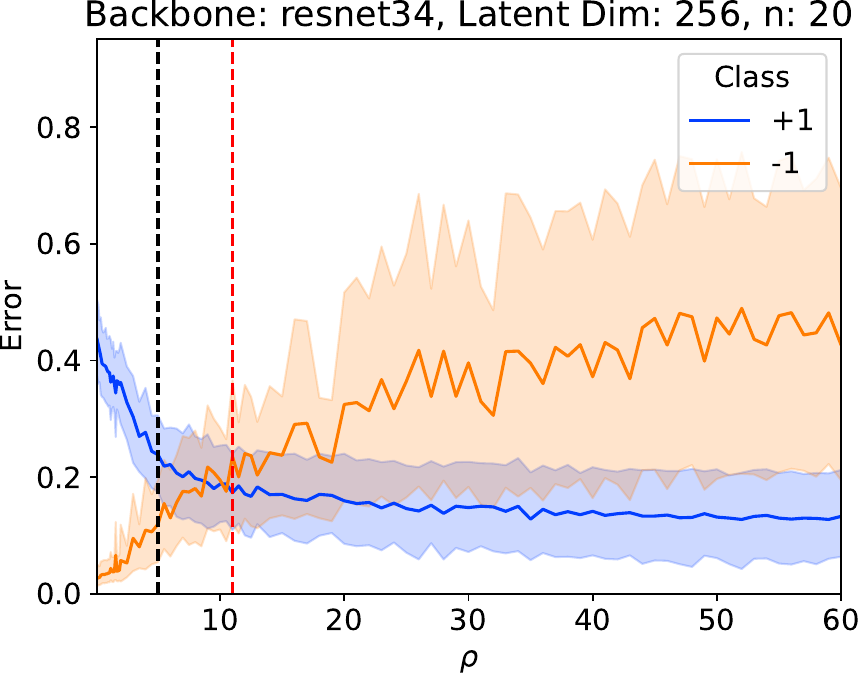}
    \caption{$n=20$}
\end{subfigure}
\caption{Per-class errors on CIFAR10 dataset, cats vs dogs. $\tilde\rho$ (red) is predictive of the cross-over point when using the effective dimension of the data (4).}
\label{fig:cifar_catdog}
\end{figure}

The story is similar for \texttt{deer} vs \texttt{horse} with the same imbalance which has effective dimension 9. Note that for $n=20$, $\tilde \rho$ is undefined, suggesting that the correct weighting strategy is to push $\rho\rightarrow\infty$. We see that up to $\rho=60$, the per-class errors do not meet. 

\begin{figure}[h]
\begin{subfigure}{0.32\linewidth}
    \centering
    \includegraphics[width=\linewidth]{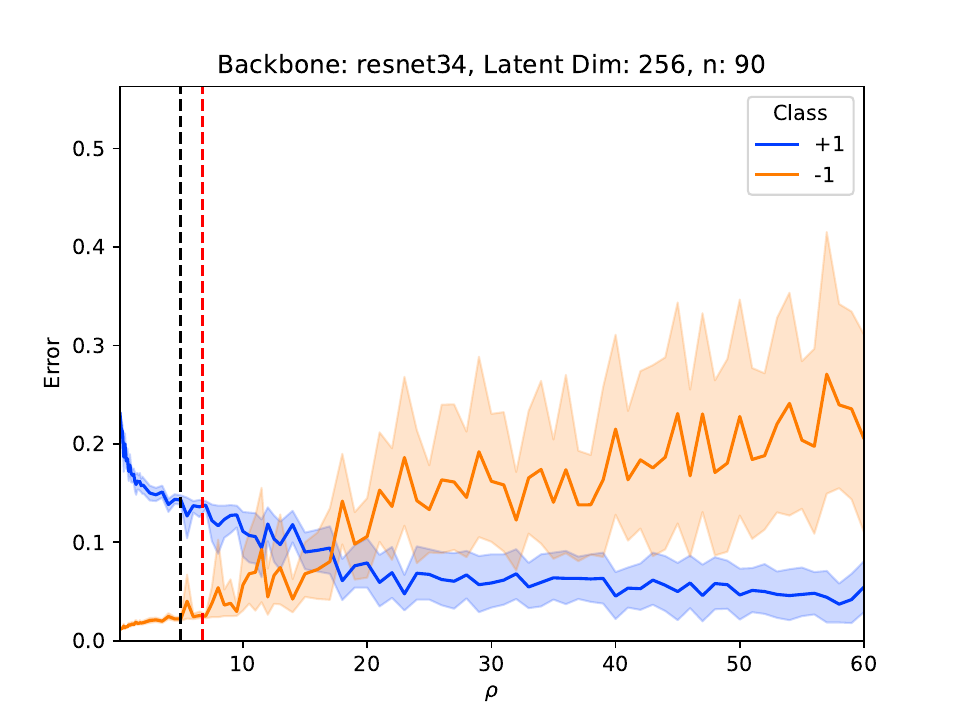}
    \caption{$n=90$}
\end{subfigure}\hfill%
\begin{subfigure}{0.32\linewidth}
    \centering
    \includegraphics[width=\linewidth]{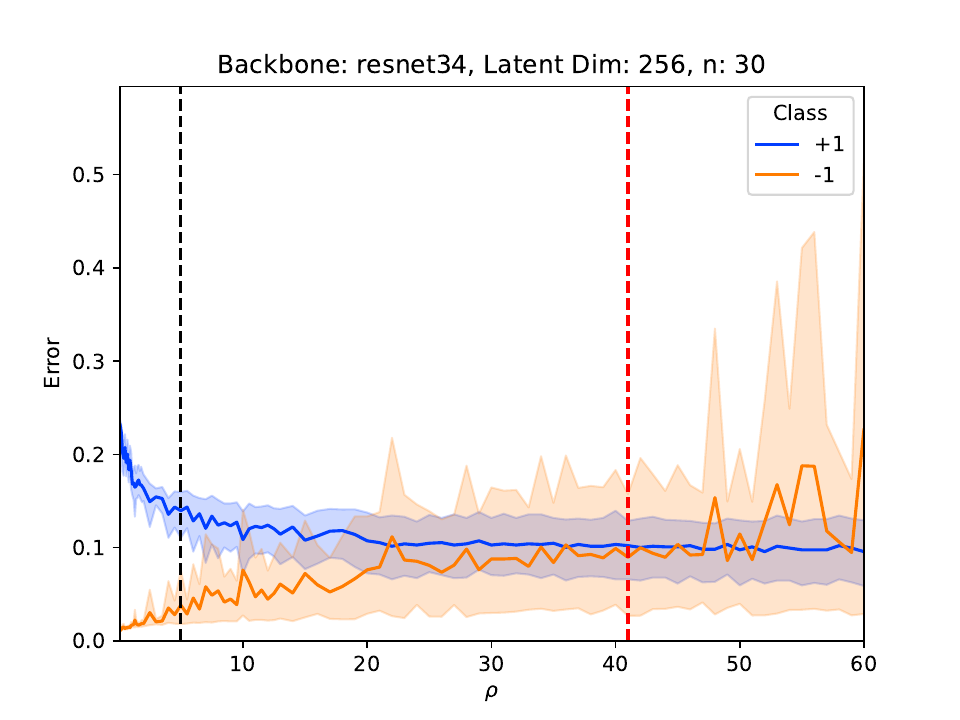}
    \caption{$n=30$}
\end{subfigure}\hfill%
\begin{subfigure}{0.32\linewidth}
    \centering
    \includegraphics[width=\linewidth]{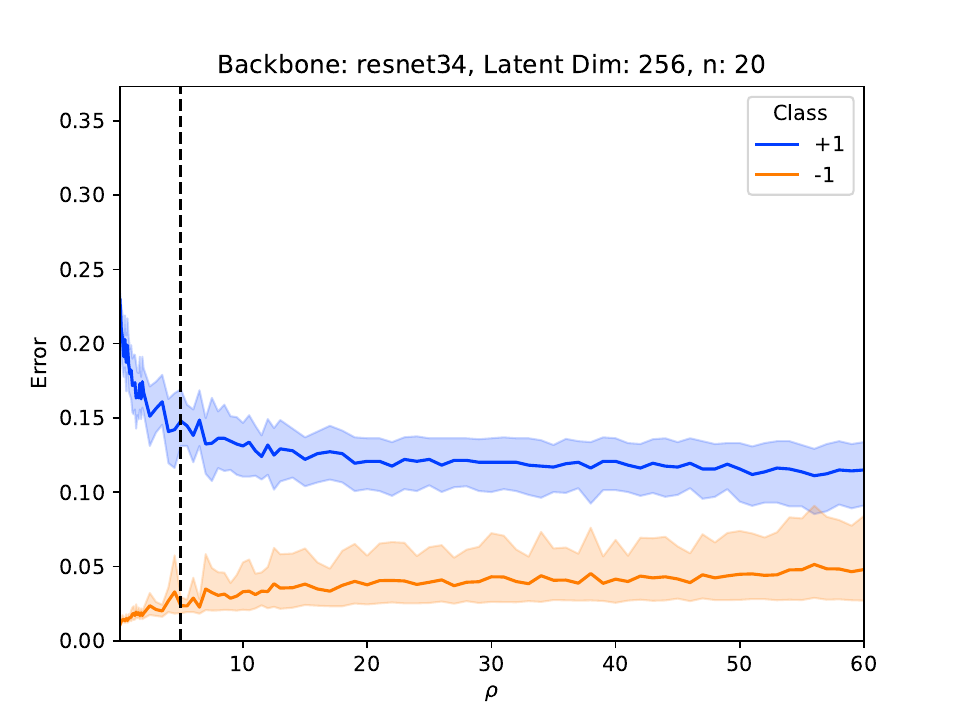}
    \caption{$n=20$}
\end{subfigure}
\caption{Per-class errors on CIFAR10 dataset, deer vs horses. $\tilde\rho$ (red) is predictive of the cross-over point when it is defined using the effective dimension of the data (9). It is undefined for $n=20$, suggesting that the per-class risks will not meet. This effect is seen in practice.}
\label{fig:cifar_deerhorse}
\end{figure}
\section{Experimental Details}
\label{app:exp_details}
All empirical experiments were performed using NVIDIA A100 GPUs while simulations were completed on CPU. Each experiment took less than 30 minutes of wall time to run after training base models. 
Base models took less than 4 hours to finetune from the pretrained weights.
A full list of hyperparameters is provided in \cref{tab:hyperparams}.
\begin{table}[ht]
    \centering
    \begin{tabular}{|c|c|}
    \hline
        Parameter & Value \\
        \hline  
         Backbone &  ResNet34 \\
         Pretrained Weights & Imagenet1k-V2 \\ 
         Latent Dimension & \{128, 256, 512\} \\ 
         Optimizer & AdamW \\ 
         Learning Rate & 1e-3 \\ 
         Full fine-tuning epochs & 10 \\
         MLP Dropout Rate & 0.5 \\
         \hline
         Fine-tuning epochs & 30 \\ 
         Fine-tuning LR & 1e-2 \\ 
            \hline 
    \end{tabular}
    \caption{Hyperparameters for CelebA and CIFAR10}
    \label{tab:hyperparams}
\end{table}
\end{document}